\newcommand{\naturals}{\ensuremath{\mathbb{N}}}	      
\newcommand{\real}{\ensuremath{\mathbb{R}}}
\newcommand{\abs}[1]{\lvert#1\rvert}
\newcommand{\norm}[1]{\lVert#1\rVert}
\newcommand{\rank}{\mathrm{rank} \; }
\newcommand{\vol}{V}
\newcommand{\dvol}{dV}		
\newcommand{\diam}{d}  
\newcommand{\inj}{\mathrm{inj}}    
\newcommand{\can}{\mathrm{can}}  
\newcommand{\ricci}{\mathrm{Ric}}
\theoremstyle{definition} 	
\theoremstyle{plain} 		\newtheorem{lemma}{Lemma}
\theoremstyle{definition} 	\newtheorem{remark}{Remark}
\theoremstyle{plain} 		\newtheorem{theorem}{Theorem}
\theoremstyle{plain} 		\newtheorem{proposition}{Proposition}
\theoremstyle{plain} 		\newtheorem{corollary}{Corollary}
\journal{Applied and Computational Harmonic Analysis}
\begin{document}

\begin{frontmatter}



\title{The embedding dimension of Laplacian eigenfunction maps}

\author{Jonathan Bates \fnref{label2}}
\ead{jonrbates{@}gmail.com}
\fntext[label2]{Now a Postdoctoral Fellow in Medical Informatics 
at VA Connecticut, West Haven, CT 06516, USA}
\address{Department of Mathematics, 
Florida State University,
Tallahassee, FL 32306, USA}

\begin{abstract}
Any closed, connected Riemannian manifold $M$ can be smoothly embedded 
by its Laplacian eigenfunction maps into $\ensuremath{\mathbb{R}}^m$ for some $m$.
We call the smallest such $m$ the maximal embedding dimension of $M$.
We show that the maximal embedding dimension of $M$ is bounded from above
by a constant depending only on the dimension of $M$, a lower bound for injectivity radius, 
a lower bound for Ricci curvature, and a volume bound.
We interpret this result for the case of surfaces isometrically immersed in $\ensuremath{\mathbb{R}}^3$, 
showing that the maximal embedding dimension only depends on bounds 
for the Gaussian curvature, mean curvature, and surface area.
Furthermore, we consider the relevance of these results for shape registration.
\end{abstract}

\begin{keyword}
spectral embedding \sep 
eigenfunction embedding \sep
eigenmap \sep 
diffusion map \sep 
global point signature \sep
heat kernel embedding \sep
shape registration \sep
nonlinear dimensionality reduction \sep
manifold learning
\end{keyword}

\end{frontmatter}



\section{Introduction}

Let $M = (M,g)$ be a closed (compact, without boundary), connected Riemannian manifold;
we assume both $M$ and $g$ are smooth.
The Laplacian of $M$ is a differential operator given by
$\Delta := -\mathrm{div} \circ \mathrm{grad}$, where $\mathrm{div}$ 
and $\mathrm{grad}$ are the Riemannian divergence and gradient, respectively.
Since $M$ is compact and connected, $\Delta$ has a discrete spectrum 
$\{\lambda_j\}_{j \in \naturals}$,
$0 = \lambda_0 < \lambda_1 \leq \lambda_2 \leq \dotsb \uparrow \infty$.
We may choose an orthonormal basis for $L^2(M)$ of eigenfunctions
$\{ \varphi_j \}_{j \in \naturals}$ of $\Delta$, where $\Delta \varphi_j = \lambda_j \varphi_j$,
$\varphi_j \in C^{\infty}(M)$, $\varphi_0 \equiv \vol(M)^{-1/2}$. 
Here, $\vol(M)$ denotes the volume of $M$ with respect to the 
canonical Riemannian measure $\vol = \vol_{(M,g)}$.

We consider maps of the form
\begin{equation}
\label{E:emap}
\begin{split}
\Phi^m : M &\longrightarrow \real^m \\
x &\longmapsto \{ \, \varphi_j(x) \, \}_{1 \leq j \leq m} \,.
\end{split}
\end{equation}
If $\Phi^m : M \to \real^m$ happens to be a smooth embedding,
then we call it an \emph{$m$-dimensional eigenfunction embedding} of $M$.  
The smallest number $m$ for which $\Phi^m$
is an embedding for \emph{some} choice of basis $\{ \varphi_j \}_{j \in \naturals}$
will herein be called the \emph{embedding dimension} of $M$, and
the smallest number $m$ for which $\Phi^m$
is an embedding for \emph{every} choice of basis $\{ \varphi_j \}_{j \in \naturals}$
will be called the \emph{maximal embedding dimension} of $M$. 
Our aim is to establish a (qualitative) bound for the maximal embedding dimension 
of a given Riemannian manifold in terms of basic geometric data.

That finite eigenfunction maps of the form \eqref{E:emap} yield smooth embeddings 
for large enough $m$ appears in a few papers in the spectral geometry literature.
Abdallah \cite{abdallah12} traces this fact back to B\'erard \cite{berardvolume}.
To our knowledge, the latest embedding result is given in Theorem 1.3 in
Abdallah \cite{abdallah12}, who shows that 
when $(M,g(t))$ is a family of Riemannian manifolds with $g(t)$ 
analytic in a neighborhood of $t =0$, then there are
$\epsilon > 0$, $m \in \naturals$, and eigenfunctions 
$\{ \varphi_j(t) \}_{1 \leq j \leq m}$ of $\Delta_{g(t)}$ such that
\begin{equation}
\begin{split}
(M,g(t)) &\longrightarrow \real^m \\
x &\longmapsto \{ \, \varphi_j(x; t) \, \}_{1 \leq j \leq m}
\end{split}
\end{equation}
is an embedding for all $t \in (-\epsilon,\epsilon)$.
The proof does not suggest how topology and geometry 
determine the embedding dimension, however.

Jones, Maggioni, and Schul \cite{jms08,jms10} have studied local 
properties of eigenfunction maps, and their results are essential to the proof of our main result.
In particular, they show that at $z \in M$, for an appropriate choice of weights 
$a_1,\dotsc,a_n \in \real$ and eigenfunctions $\varphi_{j_1},\dotsc,\varphi_{j_n}$, 
one has a coordinate chart $(U,\Phi_{a})$ around $z \in M$, where
$\Phi_{a}(x) := (a_1 \varphi_{j_1}(x), \dotsc, a_n \varphi_{j_n}(x))$, 
satisfying $\norm{\,\Phi_{a}(x)-\Phi_{a}(y)\,}_{\real^n} \sim d_M(x,y)$ 
for all $x,y \in U$. A more explicit statement of this result is given below.

Minor variants of such eigenfunction maps have been used in a variety of contexts.
For example, spectral embeddings
\begin{equation}
\label{E:specemb}
\begin{split}
M &\longrightarrow \ell^2 \\
x &\longmapsto \{ \,e^{-\lambda_j t/2} \varphi_j(x)\, \}_{j \in \naturals} 
\quad (t >0)
\end{split}
\end{equation}
have been used to embed closed Riemannian manifolds into 
the Hilbert space $\ell^2$ (i.e.\ square summable sequences with the usual inner product)
in B\'erard, Besson, and Gallot \cite{bbg88,bbg94};
Fukaya \cite{fukaya87}; Kasue and Kumura, e.g.\ \cite{kasuekumura94,kasuekumura96};
Kasue, Kumura, and Ogura \cite{kasue97}; 
Kasue, e.g.\ \cite{kasue02,kasue06}; and Abdallah \cite{abdallah12}.

Relatives of the eigenfunction maps, or a discrete counterpart, 
have been studied for data parametrization 
and dimensionality reduction, e.g.\ \cite{belkin-eigenmaps01,bai-hancock04,
lafon-thesis,coifman-lafon06,levy06,rustamov07};
for shape distances, e.g.\ \cite{jain-retrieval07,elghawalby-hancock08,
bates-icpr10,memoli-spectral}; and
for shape registration, e.g.\ \cite{carcassoni-hancock00,jain-correspondence07,
mateus-etal08,liu-icpr08,bates-isbi09,reuter-ijcv10,sharma-horaud}.
In particular, in the data analysis community,
\eqref{E:emap} is known as the \emph{eigenmap} \cite{belkin-eigenmaps01},
\eqref{E:specemb} is known as the \emph{diffusion map} \cite{lafon-thesis,coifman-lafon06},
and  $x \mapsto \{ \lambda_j^{-1/2} \varphi_j(x) \}$
is known as the \emph{global point signature} \cite{rustamov07}.
These maps are all equivalent up to an invertible linear transformation.
Hence, any embedding result applies to all of them.
For an overview of spectral geometry in shape and data analysis,
we refer the reader to M\'emoli \cite{memoli-spectral}.

There seem to be no rules for choosing the number of eigenfunctions
to use for a given application.  While not all applications require an (injective) embedding
of data, many eigenfunction-based shape registration methods do,
e.g.\ \cite{jain-correspondence07,mateus-etal08,liu-icpr08,bates-isbi09,
reuter-ijcv10,sharma-horaud}, as we explain in Section \ref{S:implications} below.
In the discrete setting one can write an algorithm to determine the smallest $m$ 
for which $\Phi^m : M \to \real^m$ is an embedding, although such an approach
may become computationally intensive.  For example, if $M$ is represented 
as a polyhedral surface, one may write an algorithm to check for self-intersections
of polygon faces in the image $\Phi^m : M \to \real^m$.
The fail-proof approach is to use all eigenfunctions, in which case
one is assured an embedding.
This approach is mentioned for point cloud data in Coifman and Lafon \cite{coifman-lafon06}.
Specifically, they bound the maximal embedding dimension from above by
the size of the full point sample.  This becomes computationally demanding,
however, especially in applications where one must solve an optimization problem 
over all eigenspaces, 
e.g.\ \cite{jain-correspondence07,mateus-etal08,bates-icpr10,reuter-ijcv10},
as we discuss in Section \ref{S:implications}.
Under the assumption that the shape or data is a sample drawn from
some Riemannian manifold, we expect the embedding dimension of the sample 
to depend only on the topology and geometry of the manifold and the quality of the sample 
(e.g.\ covering radius).
In this note we consider what topological and geometric data influence the embedding dimension
of the underlying manifold.

The 3D image $\Phi^3 : M \to \real^3$ of a hippocampus 
is plotted in Figure \ref{figure:hippocampus_registration}.
It is not clear from inspection whether the 3D image has self-intersections.
To use the $N$-D image for registration as in \cite{carcassoni-hancock00,jain-correspondence07,
mateus-etal08,liu-icpr08,bates-isbi09,reuter-ijcv10,sharma-horaud},
it would help to have an \emph{a priori} estimate
for the number of eigenfunctions necessary to 
embed the hippocampus by its eigenfunctions into Euclidean space.
As the hippocampus is initially embedded in Euclidean space, 
the reason for re-embedding it by its eigenfunctions is geometric,
as explained in Section \ref{S:implications} below. 
The 3D images $\Phi^3 : M \to \real^3$ of a few human model surfaces
are plotted in Figure \ref{figure:humanembeddings}.  
From this figure, one may get a sense of why eigenfunction
embeddings have been used to find point correspondences between shapes,
as the arms and legs are better aligned in the image.
The eigenfunctions in these examples are computed 
using the normalized graph Laplacian with Gaussian weights 
(cf.\ \cite{vonluxburg-consistency,belkin-niyogi-convergence,
ting2011analysis} and references therein).

\begin{figure}[ht]
\vspace{2em}
\begin{center}
\begin{tabular}{cccc}        
\includegraphics[height=0.9in]{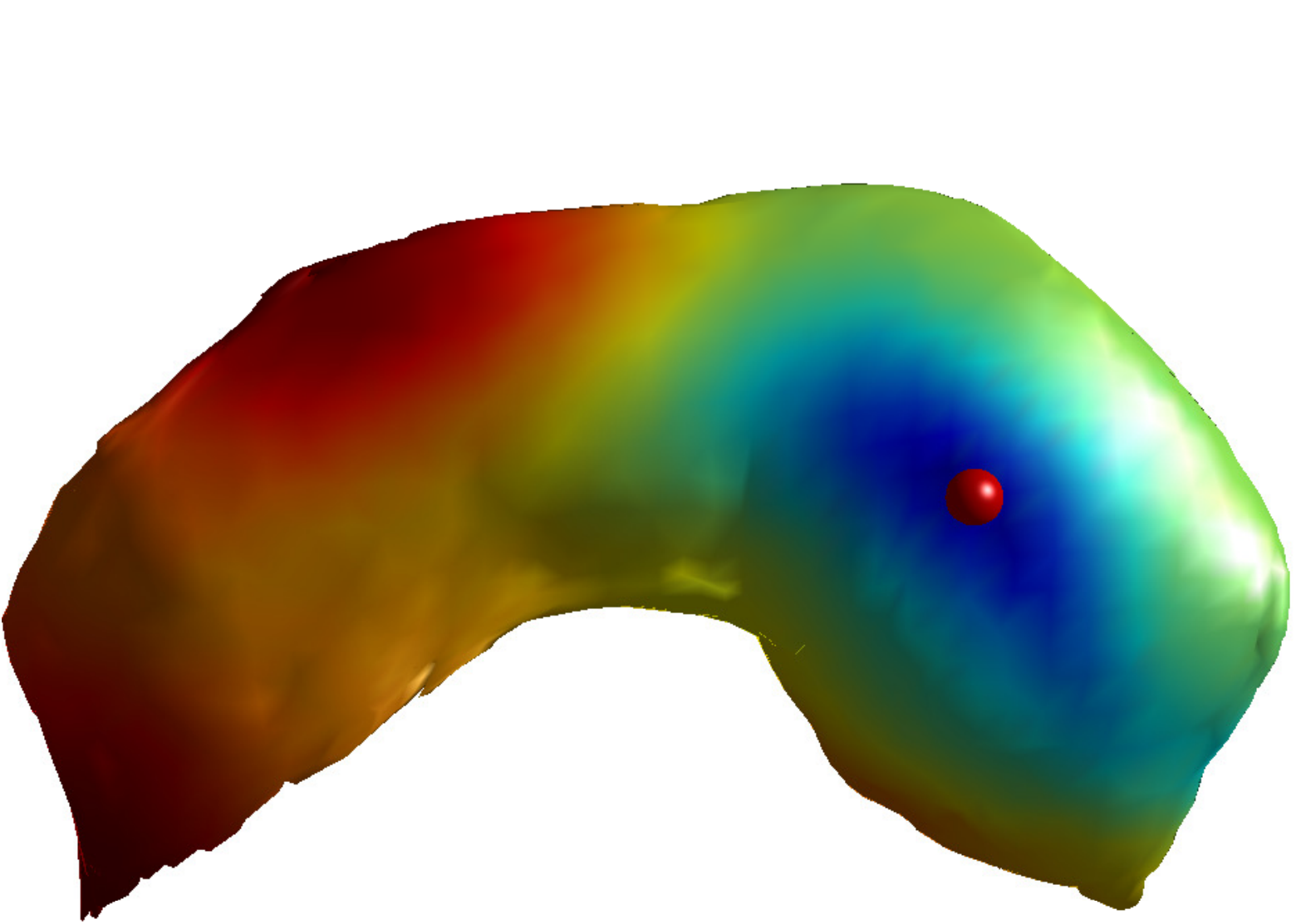} & \hspace{0.8em}
\includegraphics[height=0.78in]{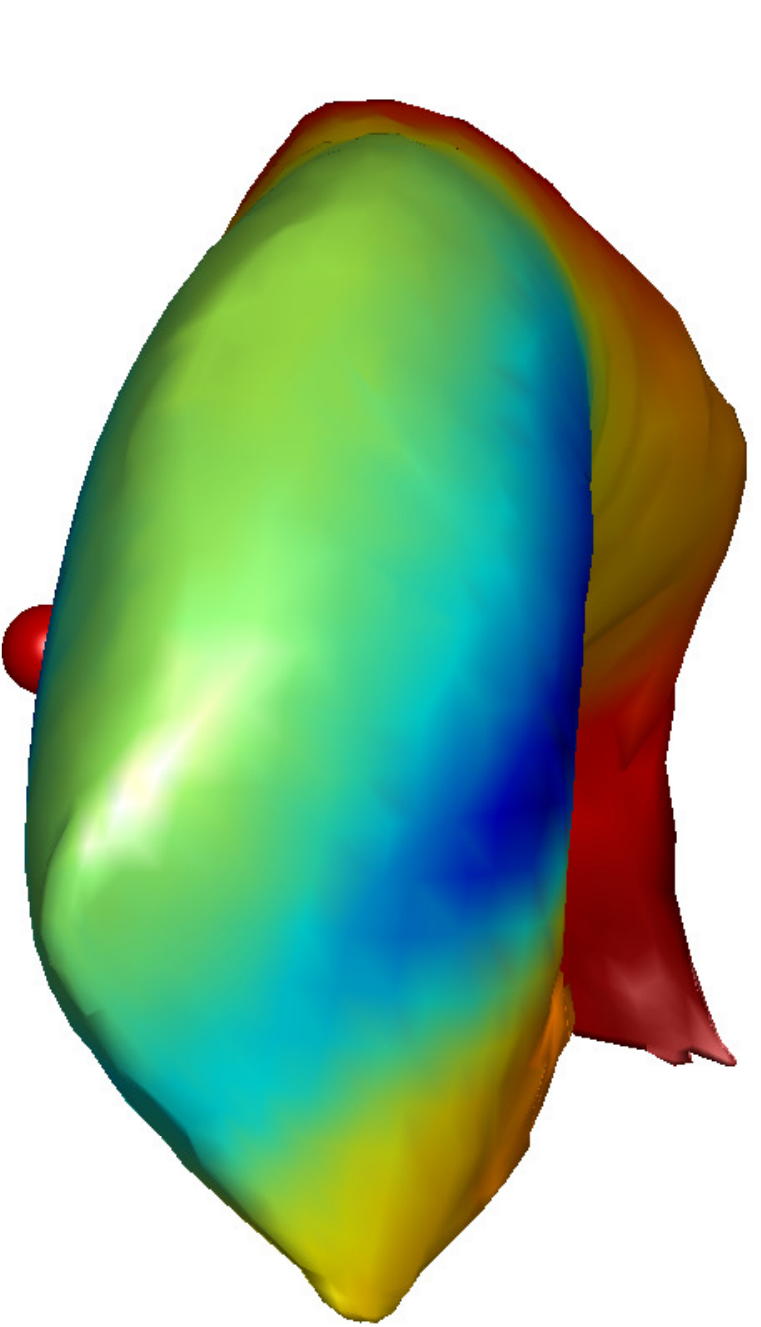} & \hspace{4em}
\includegraphics[height=0.9in]{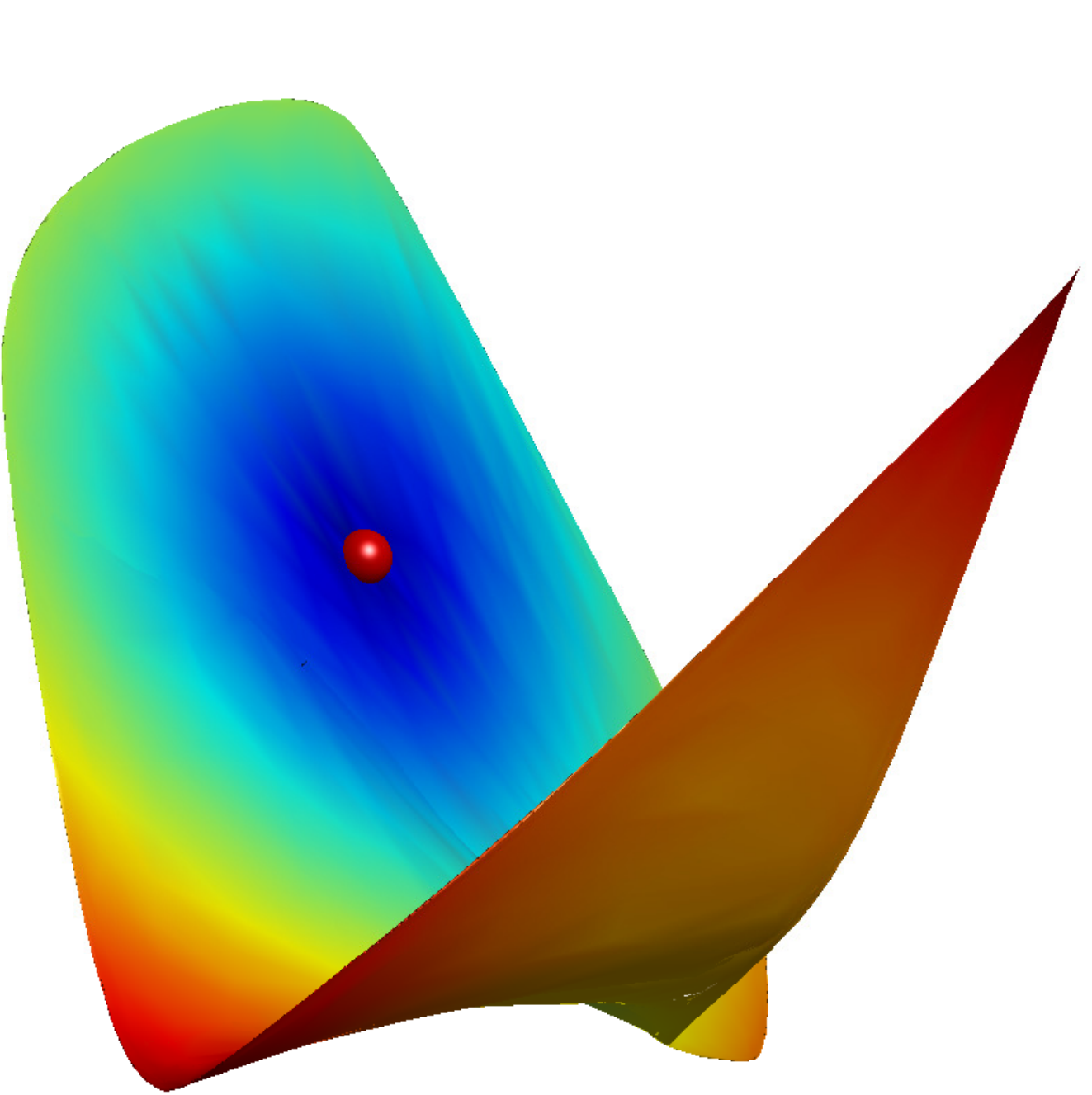} & \hspace{0.5em}
\includegraphics[height=0.9in]{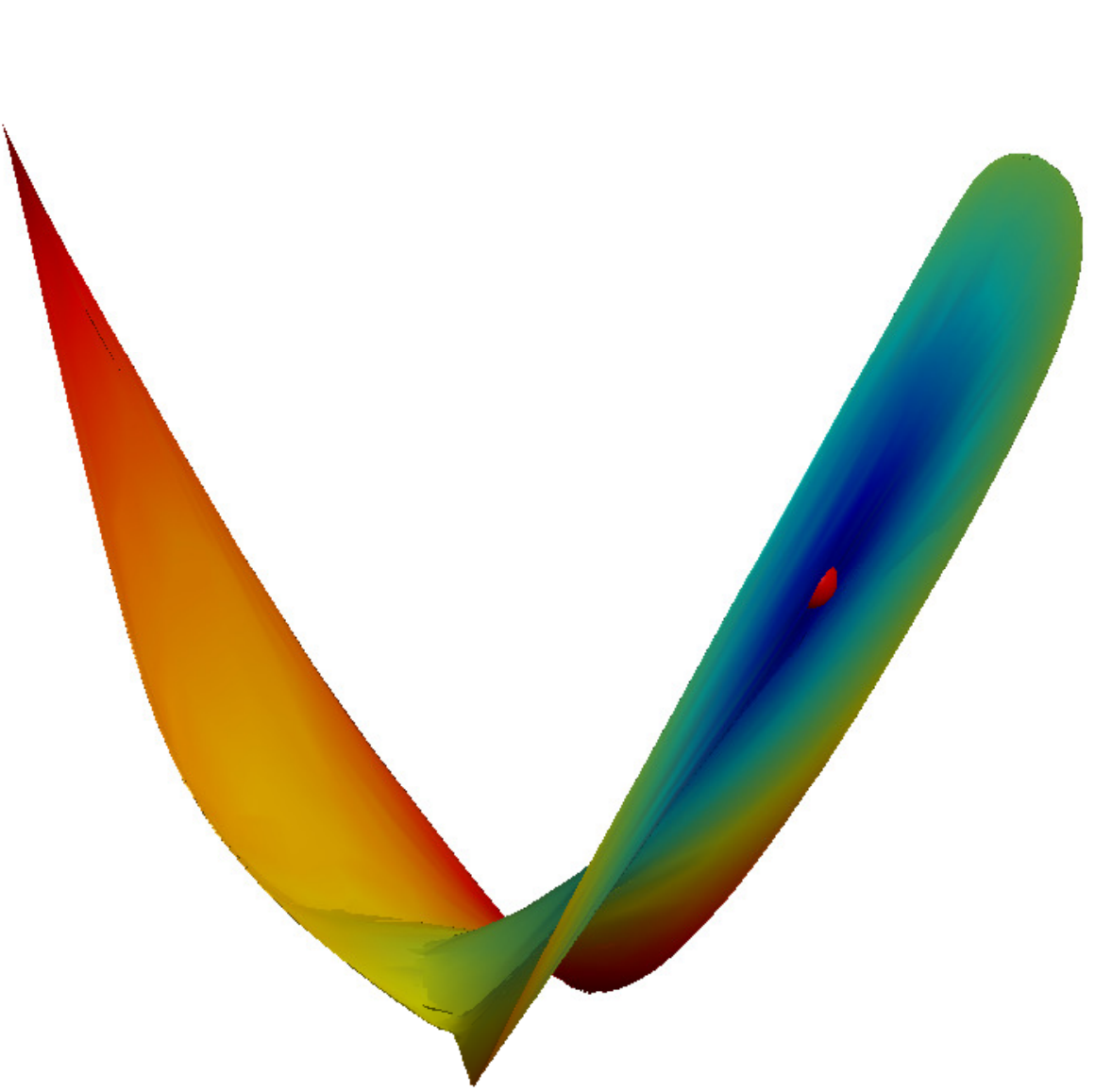}
\end{tabular}	
\end{center}
\caption{A hippocampus from two angles (left)
and its 3D eigenfunction map (right). Surface color is given by 
distance in spectral space from the point indicated by the ball.}
\label{figure:hippocampus_registration}
\end{figure}

\begin{figure}[ht]
\begin{center}
\vspace{2em}
\begin{tabular}{cc}
\includegraphics[height=1.2in]{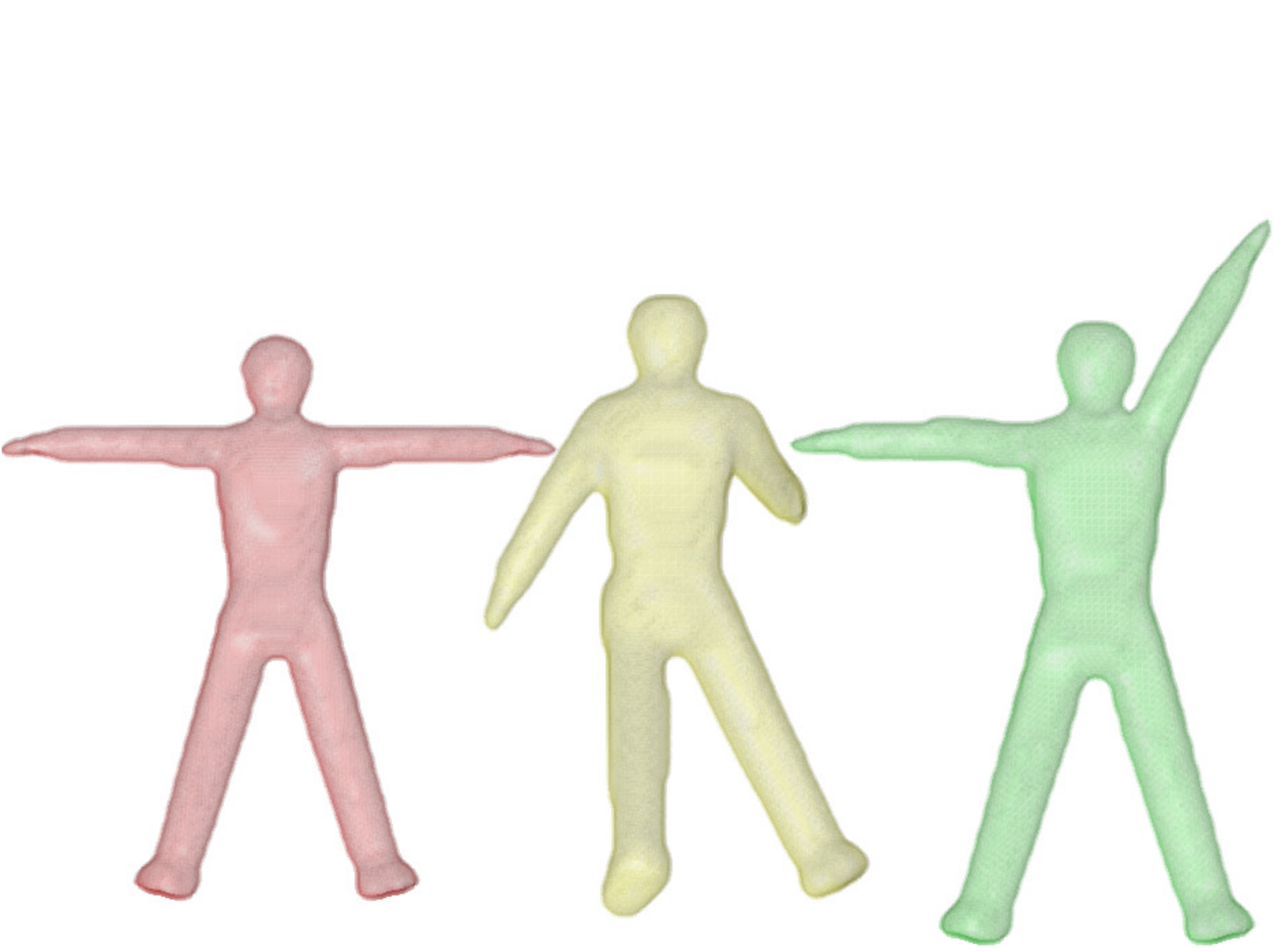} 
\hspace{2em} & \hspace{2em}
\includegraphics[height=1.2in]{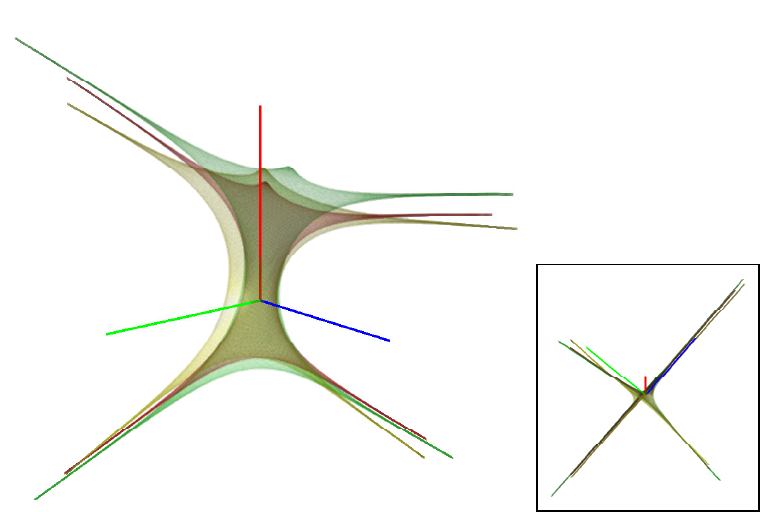} 
\end{tabular}
\end{center}
\caption{A few human model surfaces (left) and their 3D eigenfunction maps (right).
Two angles of the image are shown. Note that axes are also plotted.}
\label{figure:humanembeddings}
\end{figure}

We now recall some relevant notions from differential geometry.
Let $M, M'$ be smooth manifolds.
A smooth map $F : M \to M'$ is called an \emph{immersion} if $\rank dF_x = \dim M$
for every $x \in M$.  A smooth map $F : M \to M'$ is called a \emph{(smooth) embedding} if 
$F$ is an immersion and a homeomorphism onto its image $F(M)$. 
Recall that for a compact manifold $M$,
if $F : M \to M'$ is an injective immersion, then it is a smooth embedding.

Suppose now that $M = (M,g)$ and $M' = (M',g')$ are Riemannian manifolds.
We write the corresponding geodesic distance metrics as $d_M$ and $d_{M'}$.
For $M$ and $M'$ to be \emph{isometric} 
means that there is a diffeomorphism $F : M \to M'$ such that $F^\ast g' = g$. 
Such a map $F : M \to M'$ is called an \emph{isometry}. In particular, if $F : M \to M'$ 
is an isometry, then $d_M(x,y) = d_{M'}(F(x),F(y))$ for all $x, y \in M$.

Let $M = (M,g)$ be a complete $n$-dimensional Riemannian manifold.
Herein, $B(x,r)$ will denote the geodesic ball of radius $r$ centered at $x \in M$,
and $B(r)$ will denote the Euclidean ball of radius $r$ centered at the origin of $\real^n$.
As $M$ is complete, the domain of the exponential map
is $T_xM \cong \real^n$, i.e.\ $\exp_x : \real^n \to M$.
The \emph{injectivity radius} of $M$, denoted $\inj(M)$,
is the largest real number for which the restriction
$\exp_x : B(r) \subseteq \real^n \to B(x,r)$
is a diffeomorphism for all $x \in M$, $r \leq \inj(M)$.

Let $x \in M$, and let $P$ be a 2-plane in $T_xM$.   
The circle of radius $r < \inj(M)$ centered at $0$ in $P$
is mapped by $\exp_x : \real^n \to M$ to the geodesic circle 
$C_P(r)$, whose length we denote $l_P(r)$. 
Then
\begin{equation}
l_P(r) = 2\pi r \, (1-\frac{r^2}{6} K(P) + O(r^3) ) \quad \text{as $r \to 0^+$}.
\end{equation}
The number $K(P)$ is called the \emph{sectional curvature} of $P$.
If $\dim M = 2$, then $K(x) = K(T_xM)$ is equivalent to the 
Gaussian curvature at $x$.

Next, we use $\vol$ to denote the canonical Riemannian measure associated with $(M,g)$.
Let $x \in M$. The pulled-back measure $\exp_x^\ast(\vol)$
has a density with respect to the Lebesgue measure in $T_xM \cong \real^n$.
Let $(r,u) \in [0,\infty) \times S^{n-1}$ be polar coordinates in $T_xM$.
For $r < \inj(M)$, we may write $\exp_x^\ast(V) = \theta_x(r,u) \, dr \, du$.
Then
\begin{equation}
\theta_x(r,u) = r^{n-1} (1-\frac{r^2}{6} \ricci_x(u,u) + O(r^3) ) 
\quad \text{as $r \to 0^+$}.
\end{equation}
The term $\ricci_x(u,u)$ is a quadratic form in $u$, whose associated symmetric bilinear form is called the \emph{Ricci curvature} at $x$. 
If $\dim M = 2$, then $\ricci_x(u,u) = K(x) g(u,u)$, where $K(x)$ is 
the Gaussian curvature at $x$.

Heat flow on a closed Riemannian manifold $(M,g)$
is modeled by the heat equation
\begin{equation}
(\partial_t + \Delta) u(t,x) = 0\,,
\label{equation:heat}
\end{equation}
where $\Delta$ is the Laplacian of $M$ applied to $x \in M$.
Any initial distribution $f \in L^2(M)$ 
determines a unique smooth solution $u(t,x)$, $t>0$, 
to \eqref{equation:heat} such that  $u_t \to_{L^2} f$
as $t \to 0^+$.  This solution is given by
\begin{equation}
u(t,x) = \int_M p(t,x,y) f(y) \, \dvol(y),
\end{equation}
where $p \in C^\infty( \real^+ \times M \times M )$ 
is called the \emph{heat kernel} of $M$. For example,
the heat kernel of $\real^n$ (with Euclidean metric) is the familiar Gaussian kernel.
Lastly, the heat kernel may be expressed in the eigenvalues-functions as
\begin{equation}
p(t,x,y) = \sum_{j=0}^\infty \, e^{-\lambda_j t} \varphi_j(x) \varphi_j(y).
\label{equation:hksum}
\end{equation}
For more on the Laplacian, heat kernel, and Riemannian geometry, 
we refer the reader to, e.g., \cite{berard86,chavel84,rosenberg97,grigoryan09}.

We are now ready to state the results of this note.
Let $\kappa_0 \geq 0, i_0 > 0$ be fixed constants, $n \geq 2$, and 
consider the class of closed, connected $n$-dimensional Riemannian manifolds
\begin{equation}
\label{E:mathcalm}
\begin{split}
\mathcal{M} := \{ \; (M,g) \mid \;
& \dim M = n, \; \ricci_M \geq - (n-1)\kappa_0 g, \\
& \inj(M) \geq i_0, \; \vol(M) = 1 \; \} \,.
\end{split}
\end{equation}
Note that $\ricci_M \geq - (n-1) \kappa_0 g$ means
\begin{equation}
\ricci(\xi,\xi) \geq -(n-1) \kappa_0 \,g(\xi,\xi) \qquad (\forall \,\xi \in TM).
\end{equation}
If $M$ is a surface and $K$ denotes its Gaussian curvature,
then $\ricci_M \geq - (n-1)\kappa_0 g$ is equivalent to $K \geq -\kappa_0$.

Note that the following Theorems \ref{T:localed}, \ref{T:ued}, and \ref{T:ueds}
are independent of the choice of eigenfunction basis.
We first show that the eigenfunction maps $\Phi^m$ are well-controlled immersions
in the sense that the neighborhoods on which they are embeddings cannot
be too small.  
\begin{theorem}
\label{T:localed}
There is a positive integer $m$ and constant $\epsilon > 0$
such that, for any $M \in \mathcal{M}$, for all $z \in M$, 
\begin{align*}
\Phi^m_z : B(z,\epsilon) &\longrightarrow \real^m \\
x &\longmapsto (\varphi_{1}(x), \dotsc, \varphi_{m}(x))
\end{align*}
is a smooth embedding.
\end{theorem}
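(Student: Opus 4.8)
The plan is to combine the universal local parametrization of Jones--Maggioni--Schul \cite{jms08,jms10} with a uniform eigenvalue counting bound over the class $\mathcal{M}$, and then to use the elementary fact that an embedding is already detected by a suitable linear image of the coordinate functions. First I would collect the geometric features of $\mathcal{M}$ that the Jones--Maggioni--Schul estimates rely on: the lower Ricci bound $\ricci_M \geq -(n-1)\kappa_0 g$ together with $\inj(M) \geq i_0$ and $\vol(M) = 1$ yields, by standard comparison geometry, a uniform diameter bound, uniform volume doubling, and uniform lower bounds $\vol(B(x,r)) \geq c(n,\kappa_0,i_0)\,r^n$ for $r \leq i_0$; these in turn give the heat kernel and Poincar\'e-type estimates that make the parametrization ``universal''. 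Concretely, there exist $\epsilon_0 > 0$ and $\Lambda_0 < \infty$, depending only on $n$, $\kappa_0$, $i_0$, so that for every $M \in \mathcal{M}$ and every $z \in M$ one may choose indices $j_1,\dots,j_n$ with $\lambda_{j_k} \leq \Lambda_0$ and nonzero weights $a_1,\dots,a_n$ for which
\begin{equation*}
\Phi_a : B(z,\epsilon_0) \longrightarrow \real^n, \qquad \Phi_a(x) := (a_1\varphi_{j_1}(x),\dotsc,a_n\varphi_{j_n}(x)),
\end{equation*}
is a smooth chart, that is, a diffeomorphism onto an open subset $V \subseteq \real^n$.

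Next I would convert the eigenvalue window $\Lambda_0$ into an index window. From the on-diagonal heat kernel upper bound $p(t,x,x) \leq C(n)\,t^{-n/2}$ for $0 < t \leq t_0(n,\kappa_0,i_0)$ --- a consequence of the lower Ricci bound and the local volume lower bound above --- together with the expansion \eqref{equation:hksum}, integration over $M$ gives a uniform Weyl-type bound $N(\Lambda) := \#\{\, j : \lambda_j \leq \Lambda \,\} \leq C'(n,\kappa_0,i_0)\,(1+\Lambda)^{n/2}$ (equivalently, one may use Cheng-/Li--Yau-type eigenvalue comparison). In particular $N(\Lambda_0) \leq m_0$ for some $m_0 = m_0(n,\kappa_0,i_0)$, so the selected indices satisfy $j_1,\dots,j_n \leq m_0$ and the functions $\varphi_{j_1},\dots,\varphi_{j_n}$ appear among the coordinates of $\Phi^{m_0}_z$, for every choice of orthonormal eigenbasis.

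Setting $m := m_0$ and $\epsilon := \epsilon_0$, I would then deduce that $\Phi^m_z$ is a smooth embedding of $B(z,\epsilon)$. Let $\pi : \real^m \to \real^n$ be the projection onto the coordinates indexed by $j_1,\dots,j_n$ and $L := \diag(a_1,\dots,a_n)$, which is invertible; then $L \circ \pi \circ \Phi^m_z = \Phi_a$ is a diffeomorphism of $B(z,\epsilon)$ onto $V$. Writing $\Phi^m_z = \big(\Phi^m_z \circ \Phi_a^{-1}\big) \circ \Phi_a$, the map $h := \Phi^m_z \circ \Phi_a^{-1} : V \to \real^m$ is a smooth right inverse of $L \circ \pi$, hence an injective immersion that is a homeomorphism onto its image (a graph over $V$); as a composition of the diffeomorphism $\Phi_a$ with the embedding $h$, $\Phi^m_z$ is a smooth embedding. (The same reasoning goes through verbatim if the $\varphi_{j_k}$ must be replaced by linear combinations of eigenfunctions with eigenvalue $\leq \Lambda_0$, since these still lie in the span of $\varphi_1,\dots,\varphi_{m_0}$.)

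The step I expect to be the main obstacle is the first one: verifying that the Jones--Maggioni--Schul parametrization holds with a scale $\epsilon_0$ and eigenvalue window $\Lambda_0 \sim \epsilon_0^{-2}$ depending only on the data $n,\kappa_0,i_0$ defining $\mathcal{M}$, rather than on the individual manifold. This requires tracking the dependence of their heat kernel and Poincar\'e/Sobolev estimates, and of the scale on which a geodesic ball is ``well parametrized'', on the Ricci lower bound and the injectivity radius lower bound, and checking that the normalization $\vol(M)=1$ together with the derived diameter and local volume bounds suffices to close the estimates. Granting this, the eigenvalue count and the coordinate-projection argument are routine.
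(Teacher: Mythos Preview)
Your overall strategy matches the paper's: invoke the Jones--Maggioni--Schul local parametrization with a uniform scale and eigenvalue window over $\mathcal{M}$, then use an eigenvalue bound to convert the window into an index bound $m$, and finally observe that $\Phi^m_z$ embeds because a linear projection of it recovers the JMS chart. The eigenvalue step and the projection/composition argument are fine; the paper handles the former via the B\'erard--Besson--Gallot lower bound $\lambda_j \geq C_\lambda j^{2/n}$, which is equivalent to your counting bound, and treats the latter implicitly.

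The one substantive difference is precisely the step you flag as the main obstacle: how to make the JMS constants uniform over $\mathcal{M}$. You propose to go into the JMS proof and track the heat kernel/Poincar\'e dependencies on $n,\kappa_0,i_0$. The paper avoids this entirely. As stated in \cite{jms10}, the JMS theorem takes as \emph{input} a chart $u:U\to B(r)$ with $g^{ij}(0)=\delta^{ij}$ and uniform $C^{\alpha}$ control of $g^{ij}$; the output constants then depend only on $n$, the ellipticity constant, and the $C^{\alpha}$ bound. So the task reduces to producing such charts with uniform radius and $C^{\alpha}$ bound over $\mathcal{M}$, and for this the paper invokes the Anderson--Cheeger theorem on harmonic coordinates: under $\ricci\geq -(n-1)\kappa_0 g$ and $\inj\geq i_0$, there exist $r_h,C_h$ depending only on $n,\kappa_0,i_0,\alpha,Q$ so that every point admits a harmonic chart onto $B(r_h)$ with metric $C^{\alpha}$-controlled by $Q,C_h$. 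Plugging these charts into JMS gives $\nu=\nu(n,Q,C_h)$, hence $\epsilon=\nu^{-1}r_h$ and eigenvalue window $\nu r_h^{-2}$, all uniform. This is a cleaner and shorter route than reopening the JMS argument; if you revise, replacing your first step by ``apply Anderson--Cheeger harmonic coordinates, then JMS as a black box'' closes the gap you identified.
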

The proofs are deferred to the sections following.
Our main goal is to prove the following result.
\begin{theorem}[Uniform maximal embedding dimension]
\label{T:ued}
There is a positive integer $d$ such that, for all $M \in \mathcal{M}$, 
\begin{align*}
\Phi^d : M &\longrightarrow \real^d \\
x &\longmapsto (\varphi_1(x),\dotsc,\varphi_d(x))
\end{align*}
is a smooth embedding.
\end{theorem}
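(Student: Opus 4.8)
The plan is to globalize Theorem~\ref{T:localed} by means of uniform heat-kernel estimates. With $m,\epsilon$ the constants of Theorem~\ref{T:localed}, the local result already yields, for every $M\in\mathcal M$: (i) $\Phi^m$ is an immersion everywhere, since near any $z\in M$ it agrees with $\Phi^m_z$, which is an embedding of $B(z,\epsilon)$, so $d\Phi^m_z$ has rank $n$; and (ii) $\Phi^m(x)\ne\Phi^m(y)$ whenever $0<d_M(x,y)<\epsilon$ (take $z=x$: then $x,y\in B(x,\epsilon)$ and $\Phi^m_x$ is injective there). Hence for any $d\ge m$, $\Phi^d$ is an immersion and separates points at distance $<\epsilon$; since $M$ is closed, an injective immersion into $\real^d$ is an embedding, so it remains only to produce an integer $d\ge m$, depending on $n,\kappa_0,i_0$ alone, with $\Phi^d(x)\ne\Phi^d(y)$ whenever $d_M(x,y)\ge\epsilon$, for every $M\in\mathcal M$.

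For this I would use the heat-kernel diagonal and off-diagonal estimates. From \eqref{equation:hksum} and $\varphi_0\equiv\vol(M)^{-1/2}=1$,
\begin{equation*}
\sum_{j\ge1}e^{-\lambda_j t}\bigl(\varphi_j(x)-\varphi_j(y)\bigr)^2=p(t,x,x)+p(t,y,y)-2p(t,x,y)=:\rho_t(x,y)^2 .
\end{equation*}
On one hand $p(t,x,x)\ge e^{-\lambda_0 t}\varphi_0(x)^2=1$ for all $t>0$; on the other, the Li--Yau / Cheeger--Gromov--Taylor Gaussian upper bound under $\ricci_M\ge-(n-1)\kappa_0 g$ and $\inj(M)\ge i_0$ (which itself rests on the volume comparison $\vol(B(x,r))\ge c(n,\kappa_0,i_0)r^n$ for $r\le i_0$) gives $p(t,x,y)\le C t^{-n/2}e^{-d_M(x,y)^2/(5t)}$ for $t\le i_0^2$, with $C=C(n,\kappa_0,i_0)$. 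Picking $t_0=t_0(n,\kappa_0,i_0)\le i_0^2$ so small that $Ct_0^{-n/2}e^{-\epsilon^2/(5t_0)}\le\tfrac12$, we get $\rho_{t_0}(x,y)^2\ge1$ whenever $d_M(x,y)\ge\epsilon$. For the tail, the same estimates give the on-diagonal bound $p(t,x,x)\le C't^{-n/2}$ ($t\le i_0^2$) and, integrating in $x$, the Weyl-type bound $\lambda_j\ge cj^{2/n}$, with $C',c$ depending only on $n,\kappa_0,i_0$; hence
\begin{equation*}
\sum_{j>d}e^{-\lambda_j t_0}\varphi_j(x)^2\le e^{-\lambda_{d+1}t_0/2}\,p(t_0/2,x,x)\le C''t_0^{-n/2}e^{-c\,t_0(d+1)^{2/n}/2}\xrightarrow[d\to\infty]{}0
\end{equation*}
uniformly in $M$ and $x$.

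Now fix $d\ge m$, depending on $n,\kappa_0,i_0$ only, so large that $4C''t_0^{-n/2}e^{-c\,t_0(d+1)^{2/n}/2}<\tfrac12$. Then for $d_M(x,y)\ge\epsilon$, since $e^{-\lambda_j t_0}\le1$,
\begin{equation*}
\norm{\Phi^d(x)-\Phi^d(y)}^2\ \ge\ \sum_{j=1}^{d}e^{-\lambda_j t_0}\bigl(\varphi_j(x)-\varphi_j(y)\bigr)^2\ =\ \rho_{t_0}(x,y)^2-\sum_{j>d}e^{-\lambda_j t_0}\bigl(\varphi_j(x)-\varphi_j(y)\bigr)^2\ \ge\ 1-\tfrac12=\tfrac12 .
\end{equation*}
With (i)--(ii) this makes $\Phi^d:M\to\real^d$ an injective immersion of a closed manifold, hence a smooth embedding, for every $M\in\mathcal M$. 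Every quantity used is either basis-independent ($\rho_{t_0}$, $p$) or enters only through a one-sided bound valid for all orthonormal eigenbases, so the conclusion does not depend on the choice of basis, as claimed.

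The real work here is the \emph{uniformity} of the analytic inputs rather than the topology: one must cite or reprove the Gaussian off-diagonal estimate, the two-sided on-diagonal estimates, and the eigenvalue asymptotics with constants depending only on $n$, the Ricci lower bound, and the injectivity-radius lower bound (the normalization $\vol(M)=1$ entering only via $p(t,x,x)\ge1$), the point of genuine use of $\inj(M)\ge i_0$ being the ball-volume lower bound underlying all of them. A softer but less explicit alternative would combine Gromov precompactness of $\mathcal M$ with uniform convergence of eigenfunctions along noncollapsed sequences and contradict injectivity of the full eigenfunction map on a limit space; I would still prefer the quantitative argument, since it yields a concrete $d$.
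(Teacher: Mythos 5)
Your proposal is correct and follows essentially the same route as the paper: Theorem~\ref{T:localed} handles the immersion property and injectivity at scales below $\epsilon$, while the Li--Yau/Croke off-diagonal Gaussian upper bound combined with the trivial lower bound $p(t,x,x)\geq\vol(M)^{-1}=1$ separates points at distance $\geq\epsilon$, and a uniform tail estimate (via eigenvalue lower bounds and on-diagonal upper bounds) allows truncation of the heat-kernel sum to finitely many eigenfunctions. The only cosmetic difference is that you work with the symmetric quantity $p(t,x,x)+p(t,y,y)-2p(t,x,y)$ to obtain a direct lower bound on $\norm{\Phi^d(x)-\Phi^d(y)}^2$, whereas the paper works with $p(t,x,x)-p(t,x,y)$ and its truncation $p^d$; both are sound and yield the same conclusion with the same constants.
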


We lastly consider closed, connected surfaces isometrically immersed in $\real^3$.
We denote mean curvature by $H$, Gaussian curvature by $K$,
and surface area by $\vol$.
Let $H_0, \kappa_0, A$ be fixed positive constants and 
consider the class of surfaces
\begin{equation}
\begin{split}
\mathcal{S} := \{ \; (M,g) \mid \;
& \dim M = 2, \; \abs{K} \leq \kappa_0, \\
& \abs{H} \leq H_0, \; \vol(M) \leq A, \\ 
& \iota : M \hookrightarrow \real^3\ \text{is an isometric immersion} \;\} \,.
\end{split}
\end{equation}

\begin{theorem}[Uniform maximal embedding dimension for surfaces]
\label{T:ueds}
There is a positive integer $d$ such that, for all $M \in \mathcal{S}$, 
\begin{align*}
\Phi^d : M &\longrightarrow \real^d \\
x &\longmapsto (\varphi_1(x),\dotsc,\varphi_d(x))
\end{align*}
is a smooth embedding.
\end{theorem}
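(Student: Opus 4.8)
The plan is to deduce Theorem~\ref{T:ueds} from Theorem~\ref{T:ued} by showing that, after a homothety normalizing the area to $1$, every $M \in \mathcal{S}$ belongs to a single class $\mathcal{M}$ of the form~\eqref{E:mathcalm} whose constants $n = 2$, $\kappa_0'$, $i_0'$ depend only on the fixed data $H_0, \kappa_0, A$. The one genuinely geometric ingredient is a uniform lower bound on the injectivity radius over $\mathcal{S}$; once that is in place, the reduction is routine homothety bookkeeping together with the scaling behaviour of eigenfunction maps.

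First I would bound the second fundamental form $\mathrm{II}$ of the immersion $\iota : M \hookrightarrow \real^3$. By the Gauss equation the intrinsic Gaussian curvature is $K = \kappa_1\kappa_2$, where $\kappa_1,\kappa_2$ are the principal curvatures, and with the normalization $H = \tfrac12(\kappa_1+\kappa_2)$ one has $\kappa_i = H \pm \sqrt{H^2-K}$ with $H^2 - K = \tfrac14(\kappa_1-\kappa_2)^2 \ge 0$. Hence $\abs{K}\le\kappa_0$ and $\abs{H}\le H_0$ force $\abs{\kappa_i} \le \Lambda_0 := H_0 + \sqrt{H_0^2+\kappa_0}$, so $\norm{\mathrm{II}(v,v)} \le \Lambda_0$ for every unit $v \in TM$. (For non-orientable $M$, read these off from the well-defined quantities $\abs{H}$, $K$, $\norm{\mathrm{II}}$.) In passing, $\vol(M)$ is then also bounded below on $\mathcal{S}$ by the Willmore-type inequality $\int_M H^2\,\dvol \ge 4\pi$, though only the hypothesis $\vol(M)\le A$ will be used.

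Next, the injectivity radius: I claim $\inj(M)\ge i_0 := \min\{\pi/\sqrt{\kappa_0},\, \pi/\Lambda_0\}$ for every $M\in\mathcal{S}$, interpreting $\pi/\sqrt{\kappa_0}$ as $+\infty$ if $\kappa_0 = 0$. By Klingenberg's lemma, $\inj(M) \ge \min\{\pi/\sqrt{\sup_M K},\ \tfrac12\ell_0\}$, where $\ell_0$ is the length of the shortest closed geodesic of $M$; since $\sup_M K \le \kappa_0$ it suffices to bound $\ell_0$ from below. If $\gamma$ is a closed geodesic of $M$ parametrized by arc length, then $\iota\circ\gamma$ is a closed regular curve in $\real^3$ whose curvature vector equals $\nabla^{\real^3}_{\dot\gamma}\dot\gamma = \nabla^M_{\dot\gamma}\dot\gamma + \mathrm{II}(\dot\gamma,\dot\gamma) = \mathrm{II}(\dot\gamma,\dot\gamma)$, of norm $\le\Lambda_0$. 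Fenchel's theorem bounds the total curvature of a closed curve in $\real^3$ from below by $2\pi$, so $\Lambda_0\, L(\gamma) \ge 2\pi$; taking the infimum over closed geodesics gives $\ell_0 \ge 2\pi/\Lambda_0$, and the claim follows.

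Finally I would rescale and invoke Theorem~\ref{T:ued}. Fix $M\in\mathcal{S}$, put $V := \vol(M)\in(0,A]$, and set $\tilde M := (M, V^{-1}g)$, so that $\vol(\tilde M) = 1$, $\tilde K = V K$ hence $\abs{\tilde K}\le A\kappa_0$ and $\ricci_{\tilde M} \ge -(2-1)(A\kappa_0)\,\tilde g$, and $\inj(\tilde M) = V^{-1/2}\inj(M) \ge i_0/\sqrt{A}$ because $V\le A$. Thus $\tilde M$ lies in the class $\mathcal{M}$ with $n = 2$, $\kappa_0' := A\kappa_0$, $i_0' := i_0/\sqrt{A}$, and Theorem~\ref{T:ued} supplies an integer $d = d(H_0,\kappa_0,A)$, independent of $M$, for which the eigenfunction map $\tilde\Phi^d$ of $\tilde M$ is a smooth embedding. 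Under $g\mapsto V^{-1}g$ the Laplacian becomes $V\Delta$ with the same eigenfunctions, renormalized in $L^2$ by the constant factor $V^{1/2}$ since $\dvol_{\tilde M} = V^{-1}\dvol_M$; hence $\tilde\Phi^d = V^{1/2}\Phi^d$, and composing with the linear isomorphism $x\mapsto V^{-1/2}x$ of $\real^d$ shows that $\Phi^d$ is a smooth embedding of $M$. Since $d$ does not depend on $M\in\mathcal{S}$, this proves the theorem. I expect the injectivity radius estimate to be the main obstacle: everything hinges on converting the extrinsic bound $\norm{\mathrm{II}}\le\Lambda_0$ into control of the length of closed geodesics (Fenchel) and thence of $\inj(M)$ (Klingenberg).
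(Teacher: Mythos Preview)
Your argument is correct and takes a genuinely different route to the injectivity radius bound than the paper does. The paper first rescales to unit area, then applies Topping's inequality $\diam(M) \le C\int_M |H|\,\dvol$ to obtain a uniform diameter bound, and finally invokes Cheeger's theorem (bounded $|K|$, volume bounded below, diameter bounded above $\Rightarrow$ injectivity radius bounded below) to land in a class $\mathcal{M}$ and reduce to Theorem~\ref{T:ued}. You instead bound the second fundamental form directly from $|K|\le\kappa_0$ and $|H|\le H_0$ via the Gauss equation, use Fenchel's theorem on the immersed image of a closed geodesic to bound its length from below, and then apply Klingenberg's lemma to control $\inj(M)$; the rescaling and appeal to Theorem~\ref{T:ued} are the same. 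Your approach is more elementary, uses only classical tools (Gauss, Fenchel, Klingenberg), and produces an \emph{explicit} pre-scaling bound $\inj(M)\ge\min\{\pi/\sqrt{\kappa_0},\,\pi/\Lambda_0\}$ depending only on $\kappa_0$ and $H_0$; the paper's route through Topping and Cheeger is less explicit but would adapt more readily to higher-dimensional hypersurfaces or higher codimension, where extracting a pointwise bound on $\mathrm{II}$ from scalar curvature data is less straightforward. One small remark: Fenchel's theorem is indeed valid for immersed (not necessarily simple) closed curves, which is what you need since $\iota$ is only an immersion; it may be worth saying this explicitly.
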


Before continuing, we consider the natural question 
of whether the eigenfunction maps are stable under perturbations of the metric.
This has been answered in \cite{bbg94}.
\begin{theorem}[B\'erard-Besson-Gallot \cite{bbg94}]
\label{T:stability1}
Let $(M,g)$ be a closed $n$-dimensional Riemannian manifold, $\epsilon_0 >0$,
and $m \in \naturals$. Let $g'$ be any metric on $M$
such that $(1-\epsilon) g \leq g' \leq (1+\epsilon) g$, $\epsilon \in [0, \epsilon_0)$.
We assume that all metrics under consideration satisfy
$\ricci_{(M,g')} \geq - (n-1)\kappa_0 g'$ for some constant $\kappa_0 \geq 0$.
There exist constants $\eta_{g,j,\kappa_0}(\epsilon), 1 \leq j \leq m$,
which go to 0 with $\epsilon$, such that to any orthonormal basis
$\{ \varphi'_j \}$ of eigenfunctions of $\Delta_{g'}$ one can
associate an orthonormal basis $\{ \varphi_j \}$ of eigenfunctions 
of $\Delta_g$ satisfying 
$\norm{\varphi'_j-\varphi_j}_{L^\infty} \leq \eta_{g,j,\kappa_0}(\epsilon)$
for $j \leq m$.
\end{theorem}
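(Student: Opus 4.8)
The plan is to run the standard elliptic-perturbation argument in three stages: (i) show that a two-sided metric comparison $(1-\epsilon)g \le g' \le (1+\epsilon)g$ produces small perturbations of the operator $\Delta_{g'}$ relative to $\Delta_g$ in a suitable sense, (ii) deduce convergence of eigenvalues and of spectral projectors as $\epsilon \to 0$, with uniformity over the admissible class because of the common Ricci lower bound $\ricci_{(M,g')} \ge -(n-1)\kappa_0 g'$, and (iii) upgrade the resulting $L^2$-closeness of eigenfunctions to $L^\infty$-closeness via a priori Sobolev/elliptic estimates that are controlled by $\kappa_0$ and the geometry of the fixed metric $g$. Since the underlying smooth manifold $M$ is fixed and only the metric varies, everything takes place on one function space, which is what makes the comparison clean.

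For stage (i), I would write the Dirichlet form and the $L^2$ inner product associated with $g'$ in terms of those of $g$: the volume densities satisfy $dV_{g'} = \sqrt{\det(g^{-1}g')}\,dV_g$, and $\int_M |\nabla_{g'} u|_{g'}^2 \, dV_{g'} = \int_M g'^{ij}\partial_i u\,\partial_j u\,\sqrt{\det(g^{-1}g')}\,dV_g$, so both the form and the norm are comparable to their $g$-counterparts with constants of the form $1 + C_n\epsilon$. From this one gets, by the minimax characterization of eigenvalues, that $|\lambda_j(g') - \lambda_j(g)| \le C_{n,j}(g)\,\epsilon$ for each fixed $j$. For stage (ii), I would use the fact that if eigenvalues are close and the forms are close, then the spectral projections $\Pi_I(g')$ and $\Pi_I(g)$ onto spectral windows $I$ separating the first $m$ eigenvalues are close in operator norm on $L^2$; choosing $I$ to isolate each eigenvalue (or each cluster), one then selects, for each orthonormal eigenbasis $\{\varphi'_j\}$ of $\Delta_{g'}$, an orthonormal eigenbasis $\{\varphi_j\}$ of $\Delta_g$ with $\|\varphi'_j - \varphi_j\|_{L^2(dV_g)}$ small (the choice is needed to handle eigenvalue multiplicities and their splitting; within a degenerate eigenspace one aligns the bases by a suitable orthogonal transformation, e.g.\ via a polar-decomposition / Gram–Schmidt argument applied to $\Pi$ restricted to the relevant subspace). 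The Ricci lower bound enters here to keep the constants $\eta_{g,j,\kappa_0}(\epsilon)$ from degenerating: it gives uniform heat-kernel and eigenvalue bounds (e.g.\ via Li–Yau / Cheeger–Gromov type estimates), which in turn control the spectral gaps used in the projector estimates.

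For stage (iii), the $L^2 \to L^\infty$ upgrade, I would use that $\varphi'_j - \varphi_j$ solves an elliptic equation with bounded right-hand side: $\Delta_g(\varphi'_j - \varphi_j) = \lambda_j(g)\varphi_j - \Delta_g\varphi'_j$, and $\Delta_g \varphi'_j = \Delta_{g'}\varphi'_j + (\Delta_g - \Delta_{g'})\varphi'_j = \lambda_j(g')\varphi'_j + (\Delta_g-\Delta_{g'})\varphi'_j$, where the difference operator $\Delta_g - \Delta_{g'}$ has coefficients of size $O(\epsilon)$ in a fixed $g$-Sobolev norm (using that $g'$ is $C^0$-close but we may also invoke interior elliptic regularity bootstrapping, or assume, as is implicit in the cited theorem, enough regularity of $g'$). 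Combining $L^2$-smallness of $\varphi'_j - \varphi_j$ with uniform bounds on $\|\varphi'_j\|_{H^k}$ (which follow from the eigenvalue equation and uniform elliptic estimates controlled by $\kappa_0$ and $g$), elliptic regularity plus Sobolev embedding $H^k \hookrightarrow C^0$ for $k > n/2$ yields $\|\varphi'_j - \varphi_j\|_{L^\infty} \le \eta_{g,j,\kappa_0}(\epsilon) \to 0$.

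The main obstacle is stage (ii): correctly matching eigenbases across eigenvalue multiplicities that may split under perturbation, and tracking the dependence of the spectral-gap and projector estimates on $\kappa_0$ (and on $g$) rather than letting them blow up as $\epsilon \to 0$. This bookkeeping — ensuring the constants $\eta_{g,j,\kappa_0}(\epsilon)$ depend only on the indicated data and genuinely vanish with $\epsilon$ — is the technical heart; the elliptic-regularity upgrade in stage (iii) and the form comparison in stage (i) are comparatively routine. Since this is precisely the statement proved in \cite{bbg94}, I would follow the structure there, and in the write-up simply cite it; the sketch above records what that argument entails.
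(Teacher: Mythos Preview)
The paper does not prove this theorem at all: Theorem~\ref{T:stability1} is simply quoted from B\'erard--Besson--Gallot \cite{bbg94} and used as a black box, with no argument supplied. Your final sentence (``in the write-up simply cite it'') is therefore exactly what the paper does, and your sketch of the perturbation argument, while a reasonable outline of how one might approach such a result, is extraneous for the purposes of matching the paper.
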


\subsection{Motivations from eigenfunction-based shape registration methods}
\label{S:implications}

Here we consider the significance of a uniform maximal embedding dimension 
from the perspective of the shape registration methods in \cite{jain-correspondence07,
mateus-etal08,liu-icpr08,bates-isbi09,reuter-ijcv10,sharma-horaud}.
In shape registration, we begin with two closed, connected Riemannian manifolds
$M = (M,g)$ and $M' = (M',g')$, and our goal is to find a correspondence 
between them given by $\alpha : M \to M'$. 
(Note some use a looser notion of correspondence, e.g.\ \cite{memoli-spectral},
allowing for many-many matches between points of the ``shapes''.)
Moreover, if $M$ and $M'$ are isometric,
we require the correspondence $\alpha : M \to M'$ to be an isometry.
This correspondence may be established using eigenfunction maps, 
followed by closest point matching as follows.
Here we must be precise regarding the choice of eigenfunction basis, 
and we let $\mathcal{B}(M)$ denote the set of orthonormal
bases of real eigenfunctions of the Laplacian of $M$.
For $m \in \naturals$ and $b \in \mathcal{B}(M)$, $b = \{ \varphi_j^b \}_{j \in \naturals}$,
let $\Phi^m_b$ denote the corresponding eigenfunction map, i.e.\ 
$x \mapsto \{ \varphi_j^b(x) \}_{1 \leq j \leq m}$.
Given $b \in \mathcal{B}(M)$, $b' \in \mathcal{B}(M')$, and $m \in \naturals$,
we consider as a potential correspondence the map $\alpha(b,b',m) : M \to M'$ given by
\begin{equation}
\label{E:defalpham}
\alpha(x; b,b',m) := \arg \inf_{x' \in M'} \; 
\norm{\, \Phi^m_{b'}(x') - \Phi^m_b(x) \,}_{\real^m} \,,
\end{equation}
ties being broken arbitrarily.  
We first consider the sense in which $\alpha$ yields the desired correspondence
for isometric shapes,
and then the sense in which $\alpha$ is stable.

\begin{proposition}
If $M$ and $M'$ are isometric and $m \geq$ the maximal embedding dimensions
of $M$ and $M'$, 
then $\alpha(b,b',m) : M \to M'$ is an isometry for some choice of 
$b \in \mathcal{B}(M), b' \in \mathcal{B}(M')$.
\label{T:isomalpha}
\end{proposition}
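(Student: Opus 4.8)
The plan is to produce an explicit isometry $F:M\to M'$ and then choose the two bases so that $\alpha(b,b',m)$ turns out to equal $F$. Everything hinges on the naturality of the Laplacian under Riemannian isometries.

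First I would fix an isometry $F:M\to M'$, i.e.\ a diffeomorphism with $F^{\ast}g'=g$, and record the standard facts that such an $F$ conjugates the two Laplacians, $\Delta_{M'}(u\circ F^{-1})=(\Delta_M u)\circ F^{-1}$, and preserves the canonical Riemannian measure, so that $L^2$ inner products are preserved. Consequently $F$ carries a $\lambda$-eigenfunction of $\Delta_M$ to a $\lambda$-eigenfunction of $\Delta_{M'}$, preserves orthonormality, and --- since $\vol(M)=\vol(M')$ forces the two spectra to agree with multiplicity --- respects the nondecreasing indexing of eigenvalues. Thus for any $b=\{\varphi_j^b\}_{j\in\naturals}\in\mathcal B(M)$ the pushed-forward family $b':=\{\varphi_j^b\circ F^{-1}\}_{j\in\naturals}$ belongs to $\mathcal B(M')$, and by construction $\Phi^m_{b'}\circ F=\Phi^m_b$ for every $m$.

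Next, since $m$ is at least the maximal embedding dimension of $M$, the map $\Phi^m_b:M\to\real^m$ is a smooth embedding for \emph{any} $b\in\mathcal B(M)$; I would fix one such $b$ and let $b'$ be its pushforward. Then $\Phi^m_{b'}=\Phi^m_b\circ F^{-1}$ is the composition of a diffeomorphism with an embedding, hence is itself an embedding of $M'$, and in particular is injective. Now for each $x\in M$ the identity $\Phi^m_{b'}(F(x))=\Phi^m_b(x)$ shows that the infimum in \eqref{E:defalpham} equals $0$ and is attained at $x'=F(x)$; injectivity of $\Phi^m_{b'}$ shows it is attained \emph{only} there, so no ties arise and $\alpha(x;b,b',m)=F(x)$. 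As $x$ was arbitrary, $\alpha(b,b',m)=F$, which is an isometry, and we are done.

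I do not expect a genuine obstacle here; the only step that deserves care is checking that a Riemannian isometry really does transport an orthonormal eigenbasis to an orthonormal eigenbasis with the same labelling, which is exactly what is needed for $b'$ to qualify as a member of $\mathcal B(M')$. Once $\Phi^m_{b'}\circ F=\Phi^m_b$ is in hand, the rest is a formal consequence of the embedding (hence injectivity) hypothesis. It is worth remarking that the argument uses only $m\ge{}$ the (maximal) embedding dimension of $M$; the bound involving $M'$ in the statement is not strictly needed for this direction.
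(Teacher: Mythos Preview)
Your proof is correct and follows essentially the same line as the paper's: fix an isometry $F$, transport an eigenbasis along $F$ to obtain $b'\in\mathcal{B}(M')$ with $\Phi^m_{b'}\circ F=\Phi^m_b$, and then use injectivity of $\Phi^m_{b'}$ to conclude that the infimum in \eqref{E:defalpham} is uniquely attained at $F(x)$, so $\alpha(b,b',m)\equiv F$. Your closing observation that only the bound $m\ge$ the maximal embedding dimension of $M$ is actually used is valid (and in any case the two maximal embedding dimensions coincide, since $M$ and $M'$ are isometric).
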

\begin{proof}
Let $F : M \to M'$ be an isometry, and let $m \geq$ the maximal embedding dimensions
of $M$ and $M'$. 
Note that there are $b \in \mathcal{B}(M), b' \in \mathcal{B}(M')$ 
such that $\varphi_j^b = \varphi_j^{b'} \circ F$ for all $j \in \naturals$ (cf.\ \cite{chavel84}).
In particular, $\Phi^m_b(x)=\Phi^m_{b'}(F(x))$ for all $x \in M$.
Since $\Phi^m_{b'}$ is injective (as it is an embedding), the infimum in \eqref{E:defalpham}
is uniquely realized for each $x \in M$.  Hence $\alpha(b,b',m) \equiv F$.
\end{proof}

Now let $M = (M,g)$ be any closed, connected Riemannian manifold, $\epsilon_0 > 0$ fixed, 
and $g_{\epsilon}$, $\epsilon \in [0, \epsilon_0)$,
a family of Riemannian metrics on $M$ such that
$(1-\epsilon) g \leq g_{\epsilon} \leq (1+\epsilon) g$
for all $\epsilon \in [0, \epsilon_0)$. 
We assume that there exist $\kappa_0 \geq 0, i_0 > 0$ for which, 
with $\mathcal{M}$ as defined in \eqref{E:mathcalm},
$M_{\epsilon} := (M,g_{\epsilon}) \in \mathcal{M}$ for all $\epsilon \in [0, \epsilon_0)$.
For each $\epsilon \in [0,\epsilon_0)$, let $b'_{\epsilon} \in \mathcal{B}(M_{\epsilon})$
be arbitrary. 
The following proposition is an immediate consequence of Theorem \ref{T:stability1}, 
the triangle inequality, and the definition of $\alpha$.
\begin{proposition}
Let $m \in \naturals$. 
There is a constant $\eta_m(\epsilon)$, which goes to 0 with $\epsilon$, 
and $b : \epsilon \in [0,\epsilon_0) \mapsto b_{\epsilon} \in \mathcal{B}(M)$
such that, for all $\epsilon \in [0, \epsilon_0)$,
\begin{equation}
\sup_{x \in M} \;
\norm{\,\Phi^m_{b'_{\epsilon}}(\alpha(x; b_{\epsilon},b'_{\epsilon},m)) 
- \Phi^m_{b'_{\epsilon}}(x)\,}_{\real^m} 
\leq \eta_m(\epsilon) \,,
\end{equation}
where $\alpha(b_{\epsilon},b'_{\epsilon},m)$ is defined as in \eqref{E:defalpham}.
\end{proposition}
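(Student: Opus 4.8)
The plan is to combine the B\'erard--Besson--Gallot stability theorem (Theorem~\ref{T:stability1}) with the variational definition~\eqref{E:defalpham} of $\alpha$ and one application of the triangle inequality; the statement really is an ``immediate consequence,'' so the work is entirely formal. First I would apply Theorem~\ref{T:stability1} with base metric $g$ (observe that $g_0 = g$, since $(1-\epsilon)g \le g_\epsilon \le (1+\epsilon)g$ collapses at $\epsilon = 0$) and perturbed metric $g' = g_\epsilon$: the Ricci lower bound $\ricci_{(M,g_\epsilon)} \ge -(n-1)\kappa_0 g_\epsilon$ that the theorem requires is exactly what membership $M_\epsilon \in \mathcal{M}$ provides. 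The theorem then associates to the arbitrarily chosen basis $b'_\epsilon = \{\varphi^{b'_\epsilon}_j\} \in \mathcal{B}(M_\epsilon)$ a basis $b_\epsilon = \{\varphi^{b_\epsilon}_j\} \in \mathcal{B}(M)$ of eigenfunctions of $\Delta_g$ satisfying $\norm{\varphi^{b'_\epsilon}_j - \varphi^{b_\epsilon}_j}_{L^\infty} \le \eta_{g,j,\kappa_0}(\epsilon)$ for $1 \le j \le m$; this is the assignment $\epsilon \mapsto b_\epsilon$ demanded by the proposition.

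Next I would record the uniform-in-$x$ pointwise estimate obtained by summing these $L^\infty$ bounds over the $m$ (fixed) coordinates,
\begin{equation}
\sup_{x \in M} \norm{\Phi^m_{b_\epsilon}(x) - \Phi^m_{b'_\epsilon}(x)}_{\real^m}
\le \Bigl( \sum_{j=1}^m \eta_{g,j,\kappa_0}(\epsilon)^2 \Bigr)^{1/2} =: \tfrac{1}{2}\,\eta_m(\epsilon) ,
\end{equation}
whose right-hand side tends to $0$ as $\epsilon \to 0$ because each $\eta_{g,j,\kappa_0}(\epsilon)$ does. Then I would invoke~\eqref{E:defalpham}: since $\alpha(x; b_\epsilon, b'_\epsilon, m)$ realizes the infimum of $x' \mapsto \norm{\Phi^m_{b'_\epsilon}(x') - \Phi^m_{b_\epsilon}(x)}_{\real^m}$ over $x' \in M$, testing it against the competitor $x' = x$ gives $\norm{\Phi^m_{b'_\epsilon}(\alpha(x; b_\epsilon,b'_\epsilon,m)) - \Phi^m_{b_\epsilon}(x)}_{\real^m} \le \tfrac{1}{2}\,\eta_m(\epsilon)$. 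Combining this with the previous display through the triangle inequality $\norm{A - C}_{\real^m} \le \norm{A - B}_{\real^m} + \norm{B - C}_{\real^m}$, with $A = \Phi^m_{b'_\epsilon}(\alpha(x; b_\epsilon,b'_\epsilon,m))$, $B = \Phi^m_{b_\epsilon}(x)$, and $C = \Phi^m_{b'_\epsilon}(x)$, yields the claimed bound $\eta_m(\epsilon)$ pointwise in $x$; taking $\sup_x$ finishes the proof.

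Since no step goes beyond elementary manipulation, I do not anticipate a genuine obstacle; what little care is needed is bookkeeping. One should verify that the basis produced by Theorem~\ref{T:stability1} is precisely the $b_\epsilon$ fed into $\alpha$ (so that the middle term $B$ cancels cleanly), that the arbitrary tie-breaking in the $\arg\inf$ is harmless (we use only that the infimal value is attained and is no larger than the value at $x' = x$), and that $\eta_m(\epsilon)$ is genuinely independent of $x$---which it is, because the $L^\infty$ estimates of Theorem~\ref{T:stability1} are already uniform over $M$. It is also worth noting that $\eta_m(\epsilon)$ can be taken nondecreasing in $m$, in line with the expectation that using more eigenfunctions only accumulates more perturbation error.
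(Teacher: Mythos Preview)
Your proposal is correct and is precisely the argument the paper has in mind: the paper states (without writing out details) that the proposition ``is an immediate consequence of Theorem~\ref{T:stability1}, the triangle inequality, and the definition of $\alpha$,'' and your three steps---apply Theorem~\ref{T:stability1} to produce $b_\epsilon$, test the $\arg\inf$ against the competitor $x'=x$, then apply the triangle inequality---are exactly that. The bookkeeping remarks you add (attainment of the infimum by compactness/continuity, irrelevance of tie-breaking, uniformity in $x$ from the $L^\infty$ nature of the stability bounds) are all appropriate.
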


The size of the search space of potential correspondences
$\{ \, \alpha(b,b',m) \mid b \in \mathcal{B}(M), b'\in \mathcal{B}(M') \, \}$
grows at least exponentially in $m$.
To see this, note that we may arbitrarily flip the sign of any eigenfunction, and
so $\abs{ \{\, \Phi^m_b \mid b \in \mathcal{B}(M) \,\} } \geq 2^m$.
Consequently, to find the isometry asserted by Proposition \ref{T:isomalpha} 
with minimal computational demands, 
it would be useful to know the maximal embedding dimensions
of $M$ and $M'$.

\subsection{Examples: the embedding dimensions of the sphere and stretched torus}

We now compute the embedding dimensions of the standard sphere
and a ``stretched torus'' using formulas for their eigenfunctions.
One usually cannot derive the embedding dimension in this way, however, 
as, to paraphrase from \cite{zelditch-eigenfunctions}, 
there are only a few Riemannian manifolds for which we 
have explicit formulas for the eigenfunctions.  

Identifying the standard sphere $S^n = (S^n,\can)$ with the Riemannian submanifold
\begin{equation}
\{ \; (x^1,\dotsc,x^{n+1}) \mid \norm{x}_{\real^{n+1}} = 1 \; \} 
\end{equation}
of $\real^{n+1}$, the eigenfunctions of $\Delta_{S^n}$ 
are restrictions of harmonic homogeneous polynomials on $\real^{n+1}$ 
\cite{zelditch-eigenfunctions,chavel84}.
A polynomial $P(x^1,\dotsc,x^{n+1})$ on $\real^{n+1}$ is called 
(1) \emph{homogeneous (of degree $k$)} if $P(r x) = r^k P(x)$
and (2) \emph{harmonic} if $\Delta_{\real^{n+1}} P(x) = 0$.
Moreover, if $P(x)$ is a harmonic homogeneous polynomial of degree $k$,
then its corresponding eigenvalue is $\lambda = k(n+k-1)$, whose
multiplicity is
\begin{equation}
\begin{pmatrix} n+k \\ k \end{pmatrix} - \begin{pmatrix} n+k-2 \\ k-2\end{pmatrix} \,.
\end{equation}
One may show that an $L^2(S^n)$-orthogonal basis
of the eigenspace corresponding to $\lambda(S^n) = n$ is given 
by the restriction of the coordinate functions $x^1,\dotsc,x^{n+1}$ on $\real^{n+1}$
to $S^n$ (cf.\ Proposition 1, p.\ 35, \cite{chavel84}). We immediately have
\begin{proposition}
The embedding dimension of $S^n$ is $d = n+1$.
\end{proposition}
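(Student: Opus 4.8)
The plan is to exhibit an explicit $(n+1)$-dimensional eigenfunction embedding of $S^n$ and then show that no map $\Phi^m$ with $m \leq n$ can be an embedding for any choice of basis, so that the embedding dimension is exactly $n+1$. For the upper bound, recall from the discussion just above that the restrictions of the ambient coordinate functions $x^1, \dotsc, x^{n+1}$ to $S^n$ form an $L^2(S^n)$-orthogonal basis of the first nonzero eigenspace, associated to $\lambda_1 = \dotsb = \lambda_{n+1} = n$. After rescaling by a common normalization constant $c_n$, the functions $\varphi_j := c_n x^j|_{S^n}$, $1 \leq j \leq n+1$, are orthonormal eigenfunctions, and one may legitimately take $\varphi_1, \dotsc, \varphi_{n+1}$ as the first $n+1$ members of an orthonormal basis $\{\varphi_j\}_{j \in \naturals}$ (extending arbitrarily to higher eigenspaces). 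For this choice, the map $\Phi^{n+1} : S^n \to \real^{n+1}$ is simply $x \mapsto c_n(x^1, \dotsc, x^{n+1}) = c_n \iota(x)$, where $\iota : S^n \hookrightarrow \real^{n+1}$ is the standard inclusion. Since $\iota$ is a smooth embedding and scaling by $c_n \neq 0$ is a diffeomorphism of $\real^{n+1}$, $\Phi^{n+1}$ is a smooth embedding. Hence the embedding dimension of $S^n$ is at most $n+1$.

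For the lower bound, I would argue that $\Phi^m$ cannot be an embedding when $m \leq n$, regardless of the basis. The cleanest route is dimensional/topological: if $\Phi^m$ were an embedding for some $m \leq n$, then in particular it would be an injective continuous map from the $n$-sphere into $\real^n$, hence a homeomorphism onto a subset of $\real^n$; but by invariance of domain the image would be an open $n$-submanifold of $\real^n$, which is impossible for a compact $S^n$ with $n \geq 1$ (a nonempty compact subset of $\real^n$ cannot be open in $\real^n$). Equivalently, since $S^n$ is a closed $n$-manifold, any smooth immersion into $\real^m$ forces $m \geq n$, and an embedding forces strictly more: were $m = n$, $\Phi^n$ would be an immersion of the compact manifold $S^n$ into $\real^n$ of the same dimension, hence (by compactness and openness of the image) an open map with compact image, contradicting connectedness of $\real^n$. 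Either way, $m \leq n$ is excluded, so the embedding dimension is at least $n+1$.

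Combining the two bounds gives $d = n+1$. I expect the main point requiring care is not the construction of the embedding — that is essentially immediate from the structure of the first eigenspace — but rather making the lower-bound argument watertight: one must rule out an embedding into $\real^m$ for \emph{every} $m \leq n$ and \emph{every} orthonormal eigenbasis, and the definition of embedding dimension quantifies over all bases, so it is the topological obstruction (a compact $n$-manifold admits no embedding into $\real^n$, and a fortiori none into $\real^m$ for $m < n$), not any spectral computation, that does the work. This obstruction is basis-independent, which is exactly what is needed.
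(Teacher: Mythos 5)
Your proposal is correct and follows essentially the same route as the paper, which simply remarks ``We immediately have'' after observing that the restricted coordinate functions $x^1,\dotsc,x^{n+1}$ form an orthogonal basis of the first eigenspace (eigenvalue $\lambda = n$, multiplicity $n+1$): with that choice of basis, $\Phi^{n+1}$ is, up to scaling, the standard inclusion $S^n \hookrightarrow \real^{n+1}$, hence a smooth embedding. The one thing you do beyond the paper is to spell out the lower bound---that a compact $n$-manifold admits no topological embedding into $\real^m$ for $m \leq n$, by the invariance-of-domain argument (a chart of $S^n$ would map to an open subset of $\real^n$, so the compact image would be open and closed, hence all of $\real^n$, contradicting compactness). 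The paper treats this as too obvious to state, but your version is watertight; you are also right that this obstruction is basis-independent, which is exactly what the definition of embedding dimension (infimum over bases) requires. One small stylistic caution: to invoke invariance of domain for $S^n$ you implicitly pass through a chart of $S^n$, not $S^n$ itself; stating that one extra step would make the argument fully precise, but the substance is correct.
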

Although we get an explicit answer for the sphere, 
it does not reveal how geometry influences the embedding dimension.
Let us look at another space.

Explicit formulas are also available for the eigenfunctions of products of spheres, e.g.\ tori,
by virtue of the decomposition $\Delta_{M \times N} = \Delta_M + \Delta_N$.
We consider stretching a flat torus to have a given injectivity radius 
and volume, and then explicitly compute the embedding dimension.
We see that the embedding dimension depends on both injectivity radius and volume,
and thus cannot be bounded using only curvature and volume bounds, 
or curvature and injectivity radius
bounds.  In particular, let $0 < a < b$, $n \geq 2$, and consider the flat $n$-torus $T$
constructed by gluing the rectangle 
\begin{equation}
\{ \; (x^1,\dotsc,x^n) \mid 
0 \leq x^j \leq a \; (j \neq n), \; 
0 \leq x^n \leq b \; \} 
\end{equation}
as usual. Note
$\ricci_{T} = 0$, $\inj(T) = a/2$, and $\vol(T)=a^{n-1}b$.
\begin{proposition}
The embedding dimension of $T$ is 
\begin{equation}
\begin{split}
d &= 2( \lceil a^{-1} b \rceil + n-2) \\
&\geq 2^{1-n} \vol(T) / \inj(T)^n \,,
\end{split}
\end{equation}
where $\lceil x \rceil = $ the smallest integer greater than or equal to $x$.
\end{proposition}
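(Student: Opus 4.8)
The plan is to use the explicit spectral data of the flat torus $T=\real^n/\Lambda$, $\Lambda=a\zahlen^{n-1}\times b\zahlen$. Its (real) eigenfunctions are $\cos(2\pi\inner{\xi}{x})$ and $\sin(2\pi\inner{\xi}{x})$ with $\xi=(p/a,q/b)$ ranging over the dual lattice $\Lambda^\ast=a^{-1}\zahlen^{n-1}\times b^{-1}\zahlen$ ($p\in\zahlen^{n-1}$, $q\in\zahlen$), the eigenvalue being $4\pi^2\norm{\xi}^2=4\pi^2(\norm{p}^2a^{-2}+q^2b^{-2})$; moreover $\inj(T)=a/2$ and $\vol(T)=a^{n-1}b$. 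Put $K:=\lceil a^{-1}b\rceil$; note $K\ge 2$ because $0<a<b$. First I would sort the spectrum below the threshold $4\pi^2a^{-2}$. Since $p\neq 0$ already forces $4\pi^2\norm{\xi}^2\ge 4\pi^2a^{-2}$, the eigenvalues strictly below $4\pi^2a^{-2}$ are exactly $4\pi^2k^2b^{-2}$ for $1\le k\le K-1$, each of multiplicity $2$ with eigenspace spanned by $\cos(2\pi kx^n/b)$ and $\sin(2\pi kx^n/b)$; these account for the $2K-2$ eigenfunctions $\varphi_1,\dots,\varphi_{2K-2}$, every one of which depends on $x^n$ alone. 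The eigenspace $E$ of the threshold $4\pi^2a^{-2}$ has dimension $2(n-1)$ (or $2n$ when $a^{-1}b\in\zahlen$); it contains the $2(n-1)$-dimensional subspace $E_0:=\mathrm{span}\{\cos(2\pi x^j/a),\sin(2\pi x^j/a):1\le j\le n-1\}$, and every $f\in E$ splits as $f=g+h$ with $g\in E_0$ and $h$ depending on $x^n$ alone.

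For the upper bound $d\le 2(K+n-2)$, I would pick the orthonormal eigenbasis so that $\varphi_1,\varphi_2$ are the normalized functions $\cos(2\pi x^n/b),\sin(2\pi x^n/b)$ and $\varphi_{2K-1},\dots,\varphi_{2K+2n-4}$ are the normalized functions $\cos(2\pi x^1/a),\sin(2\pi x^1/a),\dots,\cos(2\pi x^{n-1}/a),\sin(2\pi x^{n-1}/a)$; this is legitimate because $E\supseteq E_0$ and $\dim E_0=2(n-1)$ equals the number of slots of $\Phi^{2(K+n-2)}$ lying in $E$. Up to permuting coordinates, $\Phi^{2(K+n-2)}$ then contains as a subtuple the standard product-of-circles embedding $T\hookrightarrow\real^{2n}$, which is a smooth embedding; appending further smooth coordinate functions preserves injectivity and the immersion property, and $T$ is compact, so $\Phi^{2(K+n-2)}$ is a smooth embedding.

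For the lower bound, suppose for a contradiction that $\Phi^m$ is an embedding for some eigenbasis with $m\le 2K+2n-5$. Since $E$ occupies at least $2(n-1)$ consecutive slots beginning at slot $2K-1$, all of $\varphi_{2K-1},\dots,\varphi_m$ lie in $E$. Fix $t$ and restrict $\Phi^m$ to the slice $\{x^n=t\}\cong\real^{n-1}/a\zahlen^{n-1}$: the eigenfunctions below the threshold are constant there, while the remaining ones --- at most $2n-3$ in number, all in $E$ --- restrict to $g_i(x^1,\dots,x^{n-1})+\mathrm{const}$ with $g_i\in E_0$. Injectivity of $\Phi^m$ would thus force a map $\real^{n-1}/a\zahlen^{n-1}\to\real^{r}$ with $r\le 2n-3$, whose components are real linear combinations of the $2(n-1)$ functions $\cos(2\pi x^j/a),\sin(2\pi x^j/a)$, to be injective. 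I expect this injectivity obstruction to be the crux, and would dispatch it as follows: such a map factors as $\pi\circ\iota_0$, where $\iota_0:\real^{n-1}/a\zahlen^{n-1}\to\real^{2(n-1)}$ is the standard embedding with image $\mathcal T=(S^1)^{n-1}$ and $\pi:\real^{2(n-1)}\to\real^{r}$ is linear with $\ker\pi\neq 0$; but the difference set $\mathcal T-\mathcal T=\prod_{j=1}^{n-1}\overline{B}(0,2)$ is a convex symmetric body, hence a neighbourhood of the origin, so $\ker\pi$ contains some nonzero $p-q$ with $p\neq q$ in $\mathcal T$, giving $\pi(p)=\pi(q)$ --- so $\pi\circ\iota_0$ is not injective. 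This contradiction gives $d\ge 2(K+n-2)$, and with the upper bound, $d=2(\lceil a^{-1}b\rceil+n-2)$.

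The displayed inequality is then a one-line substitution: $2^{1-n}\vol(T)/\inj(T)^n=2^{1-n}\,a^{n-1}b\,(2/a)^n=2a^{-1}b$, while $d=2\lceil a^{-1}b\rceil+2(n-2)\ge 2a^{-1}b$ since $n\ge 2$. Beyond the torus injectivity obstruction, everything is bookkeeping with the eigenvalue list and elementary facts about smooth embeddings of compact manifolds.
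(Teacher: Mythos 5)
Your proposal is correct and follows the same overall strategy as the paper (explicit flat-torus spectral data, the eigenvalue ordering $0<\lambda(1,n)<\dots<\lambda(K-1,n)<4\pi^2a^{-2}$, and the observation that the $2n$ product-of-circles coordinates sit inside $\Phi^{2(K+n-2)}$ for a suitable eigenbasis). The genuine difference is in the lower bound. The paper's proof fixes one particular eigenbasis and observes that deleting one of the last $2(n-1)$ coordinates destroys injectivity; since the embedding dimension is defined as a minimum over \emph{all} orthonormal eigenbases, this leaves open whether a rotated basis of the threshold eigenspace $E$ could do better with $2n-3$ slots. Your slice argument closes that gap: restricting any $\Phi^m$, $m\le 2K+2n-5$, to $\{x^n=t\}$ produces, after discarding constants, a composition $\pi\circ\iota_0$ of the standard $(S^1)^{n-1}\hookrightarrow\real^{2(n-1)}$ with a linear map $\pi$ of rank $\le 2n-3$, and the difference-set computation $(S^1)^{n-1}-(S^1)^{n-1}=\prod\overline{B}(0,2)$ (a convex, symmetric neighbourhood of the origin) shows $\ker\pi$ meets it in a nonzero $p-q$, killing injectivity. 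This handles every choice of eigenbasis at once, whereas the paper's coordinate-deletion remark does not. One small presentational caveat: you should make explicit (as you do implicitly via the multiplicity computation) that the $2K-2$ eigenfunctions below the threshold are forced to be $x^n$-only \emph{independently of the chosen basis}, since each $\lambda(k,n)$, $1\le k\le K-1$, has multiplicity exactly $2$ and is simple within the $x^n$ factor; this is what lets the slice argument start at slot $2K-1$ for an arbitrary eigenbasis. The bookkeeping for the integer case ($\dim E=2n$, with the extra two $x^n$-frequency-$K$ functions contributing only constants on a slice) and the final substitution $2^{1-n}\vol(T)/\inj(T)^n=2a^{-1}b\le 2\lceil a^{-1}b\rceil\le d$ are both correct.
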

\begin{proof}
Put $f_1(x) := \cos(2\pi x)$, $f_2(x) := \sin (2\pi x)$.
The unnormalized real eigenfunctions of $T$ are
\begin{equation}
f_{k_1}(a^{-1} m_1 x^1) \dotsm f_{k_{n-1}}(a^{-1} m_{n-1} x^{n-1})
f_{k_n}(b^{-1} m_n x^n) \qquad (m_i \in \naturals,\ k_i \in \{1,2 \}),
\end{equation}
with corresponding eigenvalues 
\begin{equation}
\lambda(m_1,\dotsc,m_n) = (2\pi)^2
(a^{-2} m_1^2 + \dotsb+ a^{-2} m_{n-1}^2 + b^{-2} m_n^2).
\end{equation}
We denote $\lambda(m_j,j) = \lambda(0,\dotsc,m_j,\dotsc,0)$.

First, suppose $a^{-1} b$ is not an integer, and put $p := \lfloor a^{-1} b \rfloor$.
One may check that the initial sequence of eigenvalues corresponds to 
\begin{equation}
\label{E:eigenvalueordering1}
\begin{split}
0 &< \lambda(1,n) 
< \lambda(2,n) 
< \dotsb
< \lambda(p,n) \\
& < \lambda(1,1) 
= \lambda(1,2) 
= \dotsb
=  \lambda(1,n-1) \\
& < \dotsb \,.
\end{split}
\end{equation}
The eigenvalues $\lambda(k,n)$, $k \leq p$, each have multiplicity 2; 
for example, the eigenspace corresponding to $\lambda(k,n)$ has as a basis 
$\{ f_{1}(b^{-1} k x^n), f_{2}(b^{-1} k x^n) \}$. 
It follows that $\Phi^{2p} : T \to \real^{2p}$ depends only on $x^n$.
It is readily verified
that $x^n \mapsto \Phi^{2}(x)$ is injective since,
up to phase, $\Phi^{2}(x) = (f_{1}(b^{-1} x^n), f_{2}(b^{-1} x^n))$.
Thus $x^n \mapsto \Phi^{2p}(x)$ is injective.
Put $F(x^j) = (f_{1}(a^{-1} x^j), f_{2}(a^{-1} x^j))$. Then, up to phase
and up to a permutation of the last $2(n-1)$ coordinates,
\begin{equation}
\Phi^{2p+2(n-1)}(x) = (\Phi^{2p}(x), F(x^1), F(x^2), \dotsc, F(x^{n-1})) \,.
\end{equation}
Noting $x^j \mapsto F(x^j)$ is an embedding of $[0,a]/(0 \sim a)$ into $\real^2$,
we deduce that $\Phi^{2p+2(n-1)} : T \to \real^{2p+2(n-1)}$
is an embedding and, furthermore,
that if any one of the last $2(n-1)$ coordinates are removed,
then the map is no longer injective.  It follows that $d=2p+2(n-1) 
=2( \lceil a^{-1} b \rceil + n-2)$
is the embedding dimension
of $T$ when $a^{-1} b$ is not an integer.

Now suppose that $a^{-1} b$ is an integer; put $p := a^{-1} b$.
One may check that the initial sequence of eigenvalues is 
\begin{equation}
\label{E:eigenvalueordering2}
\begin{split}
0 &< \lambda(1,n) 
< \dotsb
< \lambda(p-1,n) \\
&< (2\pi)^2 a^{-2} = \lambda(1,1) = \dotsb = \lambda(1,n-1) = \lambda(p,n) \\
&< \dotsb \,.
\end{split}
\end{equation}
Following the preceding arguments, we see that
$\Phi^{2(p-1)+2(n-1)} : T \to \real^{2(p-1)+2(n-1)}$
is an embedding when the eigenfunctions are ordered 
according to the sequence suggested by \eqref{E:eigenvalueordering2},
where the two eigenfunctions corresponding to $\lambda(p,n)$
are not included as coordinates.
\end{proof}

\begin{remark}
Note the stretched torus example shows that the embedding dimension of 
$\mathcal{M}(n,\kappa_0,i_0)$ is bounded below by 
$2^{1-n} i_0^{-n}$.
\end{remark}


\section{Proof of Theorem \ref{T:localed}}

We first show that the manifolds of $\mathcal{M}$ have uniformly bounded diameter.
That is, there is a $D > 0$ such that diameter $\diam(M) \leq D$ for all $M \in \mathcal{M}$. 
Recall $d(M) := \sup_{x,y \in M} d_M(x,y)$.
To see this, let $M \in \mathcal{M}$. By the Theorem of Hopf-Rinow,
we may take a unit speed geodesic $\gamma : \real \to M$ that realizes the diameter,
say, $d(\gamma(0),\gamma(\diam(M))) = \diam(M)$. Stack geodesic balls of radius
$i_0/2$ end-to-end along $\gamma$. 
It is a simple exercise in proof by contradiction to show these balls
are disjoint. The volumes of these balls are uniformly bounded below by
Croke's estimate (see below).  Finally,
the volume requirement $\vol(M) = 1$ limits the number of such balls,
hence the diameter of $M$.

We now recall a few function norms (cf., e.g., \cite{evanspde}).
Let $\Omega \subseteq \real^n$ be open, $0 <\alpha \leq 1$, 
$k$ a nonnegative integer, $1 \leq p < \infty$.
In this note, the following norms and seminorms will be used with 
a smooth function $f : \Omega \to \real$. We write
\begin{align}
\norm{f}_{C(\bar{\Omega})} &:= \sup_{x \in \Omega} \; \abs{f(x)} \\
[f]_{C^{\alpha}(\bar{\Omega})} &:= \sup_{x,y \in \Omega, \,x \neq y} \, 
\frac{\abs{f(x)-f(y)}}{\norm{x-y}_{\real^n}^{\alpha}} \\
\norm{f}_{C^{\alpha}(\bar{\Omega})} &:=
\norm{f}_{C(\bar{\Omega})} + [f]_{C^{\alpha}(\bar{\Omega})} \\
\norm{f}_{W^{k,p}(\Omega)} &:= 
\bigg( \sum_{\abs{\alpha} \leq k} 
\int_{\Omega} \, \abs{D^{\alpha} f}^p \, dx \bigg)^{1/p} \,.
\end{align}

Theorem \ref{T:localed} is an adaptation of the following local embedding result.
\begin{theorem}[Jones-Maggioni-Schul \cite{jms08}; see also \cite{jms10}]
\label{T:jms}
Assume $\vol(M) = 1$. Let $z \in M$ and suppose $u : U \to \real^n$ 
is a chart satisfying the following properties. \par
There exist positive constants $r, C_1, C_2$ such that \\
(1) $u(z) = 0$; \\
(2) $u(U) = B$, where $B := B(r)$ is the ball of radius $r$ in $\real^n$ 
centered at the origin; \\
(3) for some $\alpha >0$, the coefficients $g^{ij}(u) = g(du^i, du^j)$ 
of the metric inverse satisfy $g^{ij}(0) = \delta^{ij}$ and
are controlled in the $C^{\alpha}$ topology on $B$:
\begin{align}
C_1^{-1} \norm{\xi}_{\real^n}^2 \leq 
&\sum_{ij} \, \xi_i \xi_j g^{ij}(u) \leq C_1 \norm{\xi}_{\real^n}^2 
&&(\forall \, u \in B, \, \forall \, \xi \in \real^n); \\
&[g^{ij}]_{C^{\alpha}} \leq C_2 
&&(\forall \, i,j).
\end{align} \par
Then there are constants $\nu = \nu(n, C_1, C_2) > 1$,
$a_j > 0, j = 1, \dotsc, n$, and integers $j_1, \dotsc, j_n$
such that the following hold. \\
(a) The map
\begin{align*}
\Phi_{a} : B(z,\nu^{-1} r) &\longrightarrow \real^n \\
x &\longmapsto (a_1 \varphi_{j_1}(x), \dotsc, a_n \varphi_{j_n}(x))
\end{align*}
satisfies, for all $x, y \in B(z,\nu^{-1} r)$,
\begin{equation}
\frac{\nu^{-1}}{r} \, d_M(x,y) 
\leq \norm{\Phi_{a}(x)-\Phi_{a}(y)}_{\real^n}
\leq \frac{\nu}{r} \, d_M(x,y) \,;
\end{equation} 
(b) the associated eigenvalues satisfy 
$\nu^{-1} r^{-2} \leq \lambda_{j_1}, \dotsc, \lambda_{j_n} \leq \nu r^{-2}$.
\end{theorem}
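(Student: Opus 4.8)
The plan is to follow the heat-kernel strategy of Jones--Maggioni--Schul: build local coordinates from the heat kernel $p(t,\cdot,\cdot)$ at time $t\asymp r^2$, replace that heat kernel by a finite spectral sum, and then sift $n$ individual eigenfunctions out of the sum. Since the constants $\nu,a_j$ are to be scale-free, I would first reduce to $r=1$: rescaling the chart by $r^{-2}$ turns $B(z,\rho r)$ into $B(z,\rho)$, multiplies every $\lambda_j$ by $r^2$ and every bi-Lipschitz constant by $r^{-1}$, and leaves the hypotheses $g^{ij}(0)=\delta^{ij}$, $C_1^{-1}\le(g^{ij})\le C_1$, $[g^{ij}]_{C^\alpha}\le C_2$ intact. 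So take $r=t=1$ and aim for constants depending only on $n,C_1,C_2$.

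\emph{Heat-kernel estimates and heat-kernel coordinates.} Inside the coordinate ball the heat operator $\partial_t+\Delta$ is uniformly parabolic with $C^\alpha$ coefficients (the $g^{ij}$), so local parabolic regularity (Schauder estimates and Aronson/Nash-type Gaussian bounds, constants depending only on $n,C_1,C_2,\alpha$) gives, on a slightly smaller ball and for $t\asymp 1$, two-sided Gaussian estimates $c\,e^{-Cd_M(x,y)^2}\le p(t,x,y)\le C\,e^{-cd_M(x,y)^2}$ and matching bounds on $\nabla_x p$, $\nabla_x^2 p$, $\partial_t p$; in particular $p(t,x,x)\asymp 1$ and the mixed Hessian $\nabla_x\nabla_y p(t,x,y)|_{x=y=z}$ is positive definite with eigenvalues pinched between two positive constants. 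Picking points $y_1,\dots,y_n\in B(z,c)$ whose chart images form a $c$-scaled orthonormal frame, the mixed-Hessian bound makes the differential at $z$ of $x\mapsto(p(t,x,y_1),\dots,p(t,x,y_n))$ invertible with controlled condition number, and the uniform bounds on $\nabla_x p,\nabla_x^2 p$ promote this to a genuine bi-Lipschitz estimate $\norm{\cdot}\asymp d_M(\cdot,\cdot)$ on $B(z,c')$, $c'=c'(n,C_1,C_2)$.

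\emph{Truncation and eigenfunction selection.} Insert \eqref{equation:hksum}: $p(t,x,y_i)=\sum_j e^{-\lambda_j t}\varphi_j(y_i)\varphi_j(x)$. With the polynomial sup-bounds on $\varphi_j$ and $\nabla\varphi_j$ and --- crucially --- a \emph{local} heat-trace estimate $\sum_{\lambda_j\le\Lambda}\varphi_j(x)^2\lesssim\Lambda^{n/2}$ valid on $B(z,c)$ with constant depending only on $n,C_1,C_2$, the tail $\sum_{\lambda_j>T}$ is negligible in $C^1(B(z,c'))$ once $T=T(n,C_1,C_2)$ is large; hence $x\mapsto\big(\sum_{\lambda_j\le T}e^{-\lambda_j t}\varphi_j(y_i)\varphi_j(x)\big)_i$ is still bi-Lipschitz on $B(z,c'')$, and it equals $C\cdot E(x)$ for the $n\times N$ matrix $C_{ik}=e^{-\lambda_k/2}\varphi_k(y_i)$ and the finite map $E(x):=(e^{-\lambda_k/2}\varphi_k(x))_{\lambda_k\le T}$. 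The same local bound makes $N$ at most $N_0(n,C_1,C_2)$ once we delete indices whose $\varphi_k$ and $\nabla\varphi_k$ are uniformly small on $B(z,c)$; those contribute nothing to $E$ or its differential near $z$, and their deletion also forces $\lambda_k\asymp 1$ for the survivors (low-frequency eigenfunctions are nearly constant there), which is exactly (b) after unrescaling. It remains to project $E:B(z,c'')\to\real^N$ onto $n$ coordinate axes keeping bi-Lipschitz. The differential $dE_z$ factors the well-conditioned map just built (so its singular values are pinched between positive constants depending on $n,C_1,C_2$); writing $U$ for the $n\times N$ matrix of $dE_z$, Cauchy--Binet gives $\det(UU^\top)=\sum_{|S|=n}\det(U_S)^2$, so some $S=\{j_1,\dots,j_n\}$ has $|\det U_S|^2\ge\det(UU^\top)/\binom{N}{n}$, and Hadamard's inequality then caps the remaining singular values of the $n\times n$ block $U_S$, bounding $\sigma_{\min}(U_S)$ below in terms of $n,C_1,C_2$ only --- this is where $N\le N_0$ is indispensable. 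Thus $x\mapsto(e^{-\lambda_{j_k}/2}\varphi_{j_k}(x))_k$ has invertible, well-conditioned differential at $z$; elliptic $C^2$-bounds on these finitely many eigenfunctions upgrade that to a bi-Lipschitz estimate on $B(z,\nu^{-1})$, and setting $a_k:=e^{-\lambda_{j_k}/2}$ and undoing the $r=1$ rescaling yields (a) with the stated $r$-dependence.

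\emph{Main obstacle.} Everything hinges on the \emph{local} control of the eigenvalue count $N$, hence on the uniformity of $\nu$ in $n,C_1,C_2$ alone: a global Weyl bound would make $\nu$ depend on $M$. Keeping the count chart-local requires a localized heat-trace (local Weyl) estimate whose constants are read off only from the chart hypotheses, together with the observation that eigenfunctions uniformly small on $B(z,c)$ may be discarded without disturbing the local bi-Lipschitz geometry. That localization --- essentially the technical heart of Jones--Maggioni--Schul --- is the real work; the heat-kernel coordinates, the truncation, and the Cauchy--Binet selection are comparatively soft once it is in hand.
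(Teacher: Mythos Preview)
The paper does not contain a proof of this theorem at all: it is quoted verbatim as a result of Jones--Maggioni--Schul (Theorem~2.2.1 in \cite{jms10}) and used as a black box in the proof of Theorem~\ref{T:localed}. There is therefore nothing in this paper to compare your sketch against.

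That said, your outline is a recognisable summary of the actual Jones--Maggioni--Schul argument: heat-kernel coordinates at the natural time scale, truncation of the spectral expansion via a localized heat-trace/Weyl estimate, and then a combinatorial selection of $n$ eigenfunctions from the surviving finite family. You are also right that the genuine difficulty is the \emph{local} control of the number of contributing eigenfunctions (so that $\nu$ depends only on the chart data $n,C_1,C_2$ and not on global spectral information about $M$); that is indeed the technical core of \cite{jms08,jms10}.

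One caution on your opening reduction: rescaling the metric $g\mapsto r^{-2}g$ to make $r=1$ does not preserve the hypothesis $\vol(M)=1$, so the eigenfunction normalization changes; this can be absorbed into the weights $a_j$, but it is not quite the one-line reduction you present. Beyond that, your sketch is a plausible roadmap but remains a sketch --- the uniform local Weyl bound, the elimination of eigenfunctions ``small on the ball,'' and the quantitative Cauchy--Binet/Hadamard selection all require real work that \cite{jms10} carries out over many pages and that your proposal (reasonably) does not attempt to reproduce.
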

We point out that this result (Theorem 2.2.1 in \cite{jms10}) is 
stated for $g \in C^{\alpha}, \alpha > 0,$ and $M$ possibly having a boundary.
We now invoke an eigenvalue bound to use with (b) in Theorem \ref{T:jms}.
\begin{theorem}[B\'erard-Besson-Gallot \cite{bbg94}]
\label{T:eigenvaluebounds}
Let $M$ be a closed, connected Riemannian manifold such that
$\dim M = n$, $\ricci_M \geq -(n-1)\kappa_0 g$, and $\diam(M) \leq D$.
There is a constant $C_{\lambda} = C_{\lambda}(n,\kappa_0,D)$ such that 
\begin{equation*}
C_{\lambda} \,j^{2/n} \leq \lambda_j(M) 
\qquad (\forall \, j\geq 0).
\end{equation*}
\end{theorem}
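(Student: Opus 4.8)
The plan is to deduce the bound from the heat trace together with a uniform on-diagonal upper bound for the heat kernel. Recall from \eqref{equation:hksum} and the orthonormality of $\{\varphi_j\}$ in $L^2(M)$ that, for every $t > 0$,
\begin{equation*}
\sum_{j=0}^{\infty} e^{-\lambda_j t} \;=\; \int_M p(t,x,x)\,\dvol(x),
\end{equation*}
the interchange of sum and integral being legitimate since every term is nonnegative. The heart of the matter is the claim that there is a constant $C_1 = C_1(n,\kappa_0,D) > 0$ such that
\begin{equation*}
p(t,x,x) \;\leq\; \frac{C_1}{\vol(M)}\, t^{-n/2} \qquad \text{for all } x \in M \text{ and } 0 < t \leq D^2.
\end{equation*}

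I would get this by combining the Li--Yau Gaussian upper bound for the heat kernel on a manifold with $\ricci_M \geq -(n-1)\kappa_0 g$, which on the diagonal reads $p(t,x,x) \leq C(n)\,e^{C(n)\kappa_0 t}\,\vol(B(x,\sqrt t))^{-1}$, with the Bishop--Gromov comparison $\vol(B(x,\sqrt t)) \geq \vol(M)\, v(\sqrt t)/v(D)$, where $v(r)$ denotes the volume of a metric ball of radius $r$ in the $n$-dimensional space form of constant curvature $-\kappa_0$; since $v(r) \geq \omega_n r^n$ for all $r > 0$ (with $\omega_n$ the volume of the Euclidean unit ball) and $e^{C(n)\kappa_0 t} \leq e^{C(n)\kappa_0 D^2}$ on the stated range of $t$, the two estimates yield the claim with $C_1 = C(n)\,e^{C(n)\kappa_0 D^2}\,v(D)/\omega_n$.

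Granting the claim, for every $k \geq 0$ and every $0 < t \leq D^2$ we have
\begin{equation*}
(k+1)\, e^{-\lambda_k t} \;\leq\; \sum_{j=0}^{k} e^{-\lambda_j t} \;\leq\; \int_M p(t,x,x)\,\dvol(x) \;\leq\; C_1\, t^{-n/2},
\end{equation*}
hence $\lambda_k \geq t^{-1}\log\bigl((k+1)\,t^{n/2}/C_1\bigr)$. Choosing $t = t_k := \bigl(eC_1/(k+1)\bigr)^{2/n}$ makes the logarithm equal to $1$ and gives $\lambda_k \geq t_k^{-1} = (eC_1)^{-2/n}(k+1)^{2/n}$, which is the asserted inequality with a constant depending only on $n,\kappa_0,D$ --- valid whenever $t_k \leq D^2$, i.e.\ for all $k \geq k_0$ where $k_0 = k_0(n,\kappa_0,D)$ is finite. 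For the finitely many remaining indices $1 \leq k < k_0$ I would use $\lambda_k \geq \lambda_1$ together with a uniform lower bound $\lambda_1 \geq c_1(n,\kappa_0,D) > 0$ (the Li--Yau/Cheng first-eigenvalue estimate under a Ricci lower bound and a diameter upper bound); then any $C_\lambda \leq \min\{(eC_1)^{-2/n},\, c_1\,k_0^{-2/n}\}$ works, the case $k = 0$ being trivial since $\lambda_0 = 0$.

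The step I expect to be the main obstacle is the uniform on-diagonal heat kernel estimate with exactly the right structure: the $t^{-n/2}$ decay, a multiplicative constant depending only on $(n,\kappa_0,D)$, and the factor $\vol(M)^{-1}$ --- which is precisely what causes the volume to drop out of the final eigenvalue bound. This is where the Li--Yau gradient estimate (equivalently, a scale-invariant parabolic Harnack inequality) and Bishop--Gromov volume comparison do the work; the passage from the heat trace to the eigenvalue bound is then an elementary optimization in $t$. Alternatively, one can bypass Li--Yau altogether and argue by pointwise comparison with the heat kernel of the model space form, which is the route of B\'erard--Besson--Gallot \cite{bbg94}.
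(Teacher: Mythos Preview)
The paper does not supply its own proof of this statement: Theorem~\ref{T:eigenvaluebounds} is quoted directly from B\'erard--Besson--Gallot~\cite{bbg94} and used as a black box in the proofs of Theorems~\ref{T:localed} and~\ref{T:ued}. So there is no in-paper argument to compare against.

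That said, your argument is a correct and standard derivation of the bound. The chain Li--Yau on-diagonal estimate $+$ Bishop--Gromov relative volume comparison $\Rightarrow$ $p(t,x,x)\leq C_1(n,\kappa_0,D)\,\vol(M)^{-1}t^{-n/2}$, followed by the heat-trace inequality $(k+1)e^{-\lambda_k t}\leq C_1 t^{-n/2}$ and optimization in $t$, is exactly how one usually extracts such Weyl-type lower bounds; the factor $\vol(M)^{-1}$ is indeed what makes the volume disappear from the final constant. Your handling of the finitely many small indices via a first-eigenvalue lower bound is also fine. You are right that the original route in~\cite{bbg94} is different: it proceeds by direct comparison of the heat kernel with that of the model space form (an isoperimetric/symmetrization argument), avoiding the Li--Yau gradient estimate. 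Either method lands on the same conclusion with constants depending only on $(n,\kappa_0,D)$.
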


Finally, we must choose a coordinate system
satisfying the hypotheses of Theorem \ref{T:jms}.
We use harmonic coordinates.  
By definition, a coordinate chart $(U,x^i)$ of $M = (M^n,g)$ is \emph{harmonic}
if $\Delta_M x^i=0$ on $U$ for $i=1,\dotsc,n$ (cf., e.g., \cite{sabitov-shefel,kazdan-deturck}).
All necessary properties of harmonic coordinates for this note are 
contained in the following result, which follows from the proof of
Theorem 0.3 in Anderson-Cheeger \cite{anderson-cheeger}.
\begin{lemma}
\label{T:ac}
Let $\kappa_0 \geq 0$ and $i_0 > 0$,
let $(M,g)$ be a closed $n$-dimensional Riemannian manifold satisfying
\begin{equation}
\ricci_M \geq - (n-1) \kappa_0 g, \quad
\inj(M) \geq i_0,
\end{equation}
and let $\alpha \in (0,1)$ and $Q>1$ be fixed. 
Then there exist
constants $r_h, C_h$, both depending only on $n, \kappa_0, i_0, \alpha, Q$,
such that for all $z \in M$
there is a harmonic coordinate chart $u : U \to \real^n$ satisfying \\
(1) $u(z) = 0$; \\
(2) $u(U) = B$, where $B := B(r_h)$ is the ball of radius $r_h$ in $\real^n$ 
centered at the origin; \\
(3) the coefficients $g^{ij}(u) = g(du^i, du^j)$ of the metric inverse
satisfy $g^{ij}(0) = \delta^{ij}$ and
are controlled in the $C^{\alpha}$ topology on $B$:
\begin{align}
Q^{-1} \norm{\xi}_{\real^n}^2 \leq &\sum_{ij} \, \xi_i \xi_j g^{ij}(u) 
\leq Q \norm{\xi}_{\real^n}^2 
&& (\forall \, u \in B, \, \forall \, \xi \in \real^n); \label{E:invmetricbound} \\
& [g^{ij}]_{C^{\alpha}} \leq C_h 
&& (\forall \, i,j).
\end{align}
\end{lemma}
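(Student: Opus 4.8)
The plan is to deduce this from the theory of the $C^{\alpha}$-harmonic radius, following the proof of Theorem 0.3 in Anderson--Cheeger \cite{anderson-cheeger}; I sketch the structure. Fix $\alpha\in(0,1)$ and $Q>1$. For $z\in M$, let $r_h(z)$ denote the \emph{harmonic radius} at $z$: the supremum of those $r>0$ for which there is a harmonic chart $u$ on $B(z,r)$ with $u(z)=0$, $g^{ij}(0)=\delta^{ij}$, and the scale-invariant bounds
\begin{equation*}
Q^{-1}\norm{\xi}_{\real^n}^2 \leq \sum_{ij}\xi_i\xi_j\, g^{ij}(u) \leq Q\norm{\xi}_{\real^n}^2, \qquad r^{\alpha}[g^{ij}]_{C^{\alpha}(B(z,r))} \leq Q-1
\end{equation*}
hold on $B(z,r)$. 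The lemma is equivalent to the statement that $r_h$ is bounded below by a positive constant $r_0=r_0(n,\kappa_0,i_0,\alpha,Q)$, uniformly over all manifolds satisfying $\ricci_M\geq-(n-1)\kappa_0 g$ and $\inj(M)\geq i_0$: granting such an $r_0$, one takes $r_h:=r_0$ and $C_h:=(Q-1)r_0^{-\alpha}$, arranging $g^{ij}(0)=\delta^{ij}$ by a linear change of coordinates, which preserves harmonicity.

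First, $r_h(z)>0$ at every point, since harmonic coordinates exist and are smooth in a neighborhood of each point of a smooth Riemannian manifold (DeTurck--Kazdan \cite{kazdan-deturck}), so all of the above bounds hold for $r$ small. As $z\mapsto r_h(z)$ is continuous (indeed $1$-Lipschitz in $z$), it attains a positive minimum on the compact $M$; the content of the lemma is that this minimum cannot degenerate as $(M,g)$ varies over the class in question, which I would establish by contradiction.

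Suppose there are such manifolds $(M_k,g_k)$ and points $z_k\in M_k$ with $\rho_k:=r_h(z_k)\to 0$. Rescale by $\tilde g_k:=\rho_k^{-2}g_k$; since the harmonic radius scales linearly and its defining inequalities are scale-invariant, the harmonic radius of $(M_k,\tilde g_k)$ at $z_k$ equals $1$. The Ricci tensor is unchanged by scaling, so $\ricci_{\tilde g_k}\geq-(n-1)\kappa_0\rho_k^2\,\tilde g_k$ (the lower bound constant tends to $0$), while $\inj(M_k,\tilde g_k)=\rho_k^{-1}\inj(M_k,g_k)\geq i_0/\rho_k\to\infty$. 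On $B_{\tilde g_k}(z_k,\tfrac12)$ one has $r_h\geq\tfrac12$ by the Lipschitz property, hence harmonic charts of a definite size; in such a chart the metric satisfies the elliptic system $g^{kl}\partial_k\partial_l g_{ij}=-2(\ricci_g)_{ij}+\mathcal{Q}(g,\partial g)$, uniformly elliptic with $C^{\alpha}$ coefficients and right-hand side controlled by the (nearly vanishing) Ricci bound. Schauder and $W^{2,p}$ estimates then upgrade the $C^{\alpha}$ control of the $g_{ij}$ to uniform $C^{1,\beta}$ control, so by Arzel\`a--Ascoli through these charts a subsequence converges in the pointed $C^{1,\beta}$ (Cheeger--Gromov) sense to a complete $C^{1,\beta}$ Riemannian manifold $(M_{\infty},g_{\infty},z_{\infty})$ with $\ricci_{g_{\infty}}\geq 0$, infinite injectivity radius, and $r_h(z_{\infty})=1$. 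A volume-comparison argument --- Bishop--Gromov together with its rigidity case, using that the injectivity radii diverge --- then forces $(M_{\infty},g_{\infty})$ to be isometric to $\real^n$ with the flat metric, whose harmonic radius is everywhere infinite; this contradicts $r_h(z_{\infty})=1$.

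The main obstacle is the pair of technical inputs in the last step: the a priori elliptic regularity of the metric in harmonic coordinates under only a lower Ricci bound, and the identification of the blow-up limit with Euclidean space. Both are carried out in Anderson--Cheeger \cite{anderson-cheeger}, so in the write-up I would simply invoke the proof of their Theorem 0.3 and record the conclusion in the normalized form above; the passage between the scale-invariant harmonic radius and the explicit constants $r_h,C_h$ of the lemma is routine.
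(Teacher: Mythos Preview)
Your proposal is correct but emphasizes a different part of the argument than the paper does. You sketch the blow-up/contradiction proof of Anderson--Cheeger's Theorem 0.3 itself and then dismiss the passage to the lemma's precise form as routine. The paper does the reverse: it invokes Theorem 0.3 of \cite{anderson-cheeger} as a black box --- whose output is a harmonic chart on the \emph{geodesic} ball $B(z,r_0)$, with the metric $g_{ij}$ (not $g^{ij}$) satisfying $Q^{-1}\delta\leq g\leq Q\delta$ and a Sobolev bound $r_0^{\alpha}\norm{\nabla g_{ij}}_{L^p}\leq Q-1$ with $\alpha=1-n/p$ --- and then spends its effort on the conversion: (i) the image $u(B(z,r_0))$ contains the Euclidean ball $B(r_h)$ with $r_h=r_0/\sqrt{Q}$, by comparing Euclidean and Riemannian lengths of radial rays; (ii) the $L^p$ gradient bound becomes a $C^{\alpha}$ H\"older bound via Morrey's inequality; (iii) the bound on $[g_{ij}]_{C^{\alpha}}$ transfers to $[g^{ij}]_{C^{\alpha}}$ through $A^{-1}-B^{-1}=-A^{-1}(A-B)B^{-1}$ and $\norm{g^{-1}}_2\leq Q$. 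Your route gives more geometric insight into why the harmonic radius is uniformly positive; the paper's route is better suited to a reader who trusts \cite{anderson-cheeger} but wants to see exactly how its conclusion matches the hypotheses of Theorem \ref{T:jms}. The one place you should add a sentence is step (i): your harmonic radius is defined over geodesic balls, whereas the lemma demands $u(U)=B(r_h)\subset\real^n$, and while this is indeed routine under the metric pinching, it is not literally the same statement.
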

In deriving Lemma $\ref{T:ac}$,
we will use the following Sobolev-type estimate
(cf.\ Theorem 5.6.5 in Evans \cite{evanspde}).
\begin{proposition}[Morrey's inequality]
Let $\Omega \subset \real^n$ be open, bounded, and with $C^1$ boundary.
Assume $p > n$ and $u \in W^{1,p}(\Omega)$ is continuous.
Then $u \in C^{\alpha}(\bar{\Omega})$, for $\alpha = 1-n/p$, with
\begin{equation}
\norm{u}_{C^{\alpha}(\bar{\Omega})} 
\leq C \norm{u}_{W^{1,p}(\Omega)} \,,
\end{equation}
where $C$ is a constant depending only on $n,\alpha, \Omega$.
\end{proposition}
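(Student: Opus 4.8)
The plan is the classical one: extend $u$ to all of $\real^n$, prove the estimate for smooth compactly supported functions via a pointwise bound of the mean oscillation of $u$ by a Riesz potential of $Du$, and then apply Hölder's inequality, with the hypothesis $p>n$ entering precisely to make the relevant singular weight locally integrable. For the \emph{reduction}, since $\Omega$ is bounded with $C^1$ boundary there is a bounded linear extension operator $E\colon W^{1,p}(\Omega)\to W^{1,p}(\real^n)$ with $Eu=u$ on $\Omega$, $Eu$ compactly supported, and $\norm{Eu}_{W^{1,p}(\real^n)}\le C(n,p,\Omega)\,\norm{u}_{W^{1,p}(\Omega)}$; so it suffices to bound $\norm{v}_{C^{\alpha}(\real^n)}$ by $\norm{v}_{W^{1,p}(\real^n)}$ for $v:=Eu$ and then restrict. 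Next, mollify: pick $v_k\in C_c^\infty(\real^n)$ with $v_k\to v$ in $W^{1,p}(\real^n)$. If the Hölder estimate holds for every $v_k$, then $\{v_k\}$ is Cauchy in $C^{\alpha}(\real^n)$, hence converges uniformly to a continuous $v_\infty$; since $v_k\to v$ in $L^p$, $v_\infty=v$ a.e., and because $u$ (hence $v$ on $\Omega$) is \emph{continuous} by hypothesis, $v_\infty=v$ on $\Omega$. Thus it is enough to prove the estimate for $v\in C_c^\infty(\real^n)$.

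The core step is the \emph{oscillation estimate}: for $v\in C^1$, $x\in\real^n$, $r>0$,
\begin{equation}
\label{E:morreyosc}
\frac{1}{\abs{B(x,r)}}\int_{B(x,r)}\abs{v(y)-v(x)}\ud y
\;\le\;C(n)\int_{B(x,r)}\frac{\abs{Dv(y)}}{\abs{y-x}^{n-1}}\ud y \,,
\end{equation}
proved by writing $v(x+s\omega)-v(x)=\int_0^s Dv(x+t\omega)\cdot\omega\ud t$ for $\omega\in S^{n-1}$, integrating over $\omega\in S^{n-1}$ and over $s\in(0,r)$, and passing to the variable $y=x+t\omega$ (its Jacobian $t^{n-1}$ is absorbed by the weight $\abs{y-x}^{1-n}$). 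Applying Hölder with exponents $p$ and $p'=p/(p-1)$ to the right side of \eqref{E:morreyosc} gives
\begin{equation}
\int_{B(x,r)}\frac{\abs{Dv(y)}}{\abs{y-x}^{n-1}}\ud y
\;\le\;\norm{Dv}_{L^p(\real^n)}\Bigl(\int_{B(x,r)}\abs{y-x}^{(1-n)p'}\ud y\Bigr)^{1/p'} ,
\end{equation}
and since $p>n$ one has $(1-n)p'>-n$, so the last integral is finite and equals $C(n,p)\,r^{\,1-n/p}$. With $\alpha=1-n/p$, \eqref{E:morreyosc} becomes
\begin{equation}
\label{E:morreyosc2}
\frac{1}{\abs{B(x,r)}}\int_{B(x,r)}\abs{v(y)-v(x)}\ud y\;\le\;C(n,p)\,r^{\alpha}\,\norm{Dv}_{L^p(\real^n)}\qquad(\forall\,x,\ \forall\,r>0) .
\end{equation}

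To \emph{conclude}: for the Hölder seminorm, given $x\neq y$ set $r:=\abs{x-y}$ and $W:=B(x,r)\cap B(y,r)$; then $\abs{W}\ge c(n)r^n$ (it contains the ball of radius $r/2$ about the midpoint), and
\[
\abs{v(x)-v(y)}\le\frac{1}{\abs{W}}\int_W\bigl(\abs{v(x)-v(z)}+\abs{v(z)-v(y)}\bigr)\ud z\le C(n,p)\,r^{\alpha}\,\norm{Dv}_{L^p(\real^n)}
\]
by \eqref{E:morreyosc2}, so $[v]_{C^{\alpha}(\real^n)}\le C(n,p)\norm{Dv}_{L^p(\real^n)}$. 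For the sup norm, take $r=1$ in \eqref{E:morreyosc2} and add $\frac{1}{\abs{B(x,1)}}\int_{B(x,1)}\abs{v}\le C(n,p)\norm{v}_{L^p(\real^n)}$ (Hölder again) to get $\norm{v}_{C^{\alpha}(\real^n)}\le C(n,p)\norm{v}_{W^{1,p}(\real^n)}$. Restricting to $\Omega$ and composing with the extension bound yields $\norm{u}_{C^{\alpha}(\bar\Omega)}\le C\,\norm{u}_{W^{1,p}(\Omega)}$ with $C=C(n,\alpha,\Omega)$.

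There is no deep obstacle here: the identity \eqref{E:morreyosc} plus a single application of Hölder's inequality is essentially the whole analytic content. The two points that need care are (i) invoking the Sobolev extension operator, which is where the dependence of $C$ on $\Omega$ (through its $C^1$ regularity) enters, and (ii) the density/limit step, which is clean precisely because $u$ is assumed continuous, so that the uniform limit of the $v_k$ must coincide with $u$ itself rather than merely with some a.e.-representative.
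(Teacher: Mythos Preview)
Your proof is correct and follows the classical route (extension to $\real^n$, the pointwise oscillation estimate by a Riesz potential of $Dv$, H\"older, then the two-ball overlap trick for the seminorm). Note, however, that the paper does not actually prove this proposition: it is quoted as a known Sobolev-type estimate with a reference to Evans (Theorem 5.6.5), and is used only as a tool inside the proof of Lemma~\ref{T:ac}. Your argument is essentially the proof given in Evans, so there is nothing to compare.
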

\begin{proof}[Proof of Lemma \ref{T:ac}]
Theorem 0.3 in Anderson and Cheeger \cite{anderson-cheeger}
asserts that under the given hypotheses there is a 
harmonic coordinate chart $u : B(z,r_0) \to \real^n$, 
$E := u(B(z,r_0))$, such that \\
(1') $u(z) = 0$; \\
(2') $r_0 = r_0(n,\kappa_0,i_0,\alpha,Q)$; \\
(3') the coefficients 
$g_{ij}(u) = g(\frac{\partial}{\partial u^i}, \frac{\partial}{\partial u^j})$ 
of the Riemannian metric satisfy $g_{ij}(0) = \delta_{ij}$ and, 
with $p$ defined by $\alpha =1-n/p$,
\begin{align}
Q^{-1} & \norm{\xi}_{\real^n}^2 
\leq \sum_{ij} \, \xi^i \xi^j g_{ij}(u) 
\leq Q \norm{\xi}_{\real^n}^2 
&& (\forall \, u \in E, \, \forall \, \xi \in \real^n); \label{E:metricbound} \\
& r_0^{\alpha} \norm{\nabla g_{ij}}_{L^p(E)} \leq Q -1 
&& (\forall \, i,j). \label{E:metriclip}
\end{align} 
\par
First, we put $r_h := r_0 / \sqrt{Q}$ and show that $B = B(r_h) \subseteq E$.
Fix a unit vector $v \in \real^n$, and put $\gamma(t) = t v$.
Note $\norm{\gamma'(t)}_{g}^2 \leq Q$ by \eqref{E:metricbound}.
Let $L(\cdot)$ denote the length function on curves in $M$.
Then $t < r_0/\sqrt{Q}$ implies $d_M(\gamma(0),\gamma(t)) 
\leq L(\gamma |_{[0,t]}) \leq t \sqrt{Q} < r_0$. 
\par
Second, by Morrey's inequality,  
there is a constant $C = C(n,\alpha,r_0)$ for which
$\norm{g_{ij}}_{C^{\alpha}(\bar{B})} 
\leq C \norm{g_{ij}}_{W^{1,p}(B)}$.
Then, by \eqref{E:metricbound} and \eqref{E:metriclip},
there is a constant $C = C(n,\alpha,r_0,Q)$ such that
$[g_{ij}]_{C^{\alpha}(\bar{B})} \leq
\norm{g_{ij}}_{C^{\alpha}(\bar{B})} \leq C$ for all $i,j$.
\par
Third, note that bounds \eqref{E:metricbound} and \eqref{E:invmetricbound}
on the metric and its inverse are equivalent.
\par
Fourth, we show that $[g^{ij}]_{C^{\alpha}} 
= [g^{ij}]_{C^{\alpha}(\bar{B})}$ is bounded.
For $x,y \in B$, put $A := (g_{ij}(x))$ and $B := (g_{ij}(y))$.
We use $\norm{\cdot}_2$ to denote the induced 2-norm on matrices
in $\real^{n \times n}$, and $\norm{\cdot}_{max}$ to denote
the largest magnitude over entries of a matrix in  $\real^{n \times n}$.
Note $\norm{\cdot}_{max} \leq \norm{\cdot}_{2} \leq n \norm{\cdot}_{max}$,
$\norm{A^{-1}}_2 \leq Q$, $\norm{B^{-1}}_2 \leq Q$,
and $A^{-1}-B^{-1} = -A^{-1}(A-B)B^{-1}$. Hence
\begin{align}
\abs{g^{ij}(x)-g^{ij}(y)} &\leq \norm{A^{-1}-B^{-1}}_{max} \\
&\leq \norm{A^{-1}-B^{-1}}_{2} \\
&= \norm{A^{-1}(A-B)B^{-1}}_{2} \\
&\leq \norm{A^{-1}}_2 \norm{A-B}_2  \norm{B^{-1}}_{2} \\
&\leq n Q^2 \cdot \; \max_{kl} \, \abs{g_{kl}(x)-g_{kl}(y)}
\end{align}
It follows that
$[g^{ij}]_{C^{\alpha}} \leq n Q^2 C$ for all $i, j$.
\end{proof}

Using harmonic coordinates and the eigenvalue bound
with Theorem \ref{T:jms}, we finish the proof.
\begin{proof}[Proof of Theorem \ref{T:localed}]
Fix $Q >1$ and $\alpha <1$. Our choice of $n, \kappa_0, i_0$ then
fixes the constants $r_h, C_h$ for harmonic coordinates.
Use harmonic coordinates in Theorem \ref{T:jms}
with $C_1 = Q$, $C_2 = C_h$, and $r = r_h$.
These determine the constants $\nu = \nu(n,C_1,C_2)$ and 
$C_{\lambda} = C_{\lambda}(n,\kappa_0,D)$ in 
Theorems \ref{T:jms} and \ref{T:eigenvaluebounds}, respectively.
Let $m+1$ be the smallest integer such that 
$C_{\lambda} (m+1)^{2/n} > \nu r_h^{-2}$.
Now, for any $M \in \mathcal{M}$,
$\lambda_{m+1}(M) \geq C_{\lambda} (m+1)^{2/n} > \nu r_h^{-2}$.
It follows from Theorem \ref{T:jms} that 
$\Phi^m_z(M) : B(z, \epsilon) \to \real^m$
is an embedding with $\epsilon = \nu^{-1} r_h$.
\end{proof}


\section{Proof of Theorem \ref{T:ued}}

The proof of Theorem \ref{T:ued} builds on Theorem \ref{T:localed},
extending injectivity to the whole manifold via heat kernel estimates.
In particular, a Gaussian bound for the heat kernel will be extended to the partial sum
\begin{equation}
p^{k}(t,x,y) := \sum_{j=0}^k \, e^{-\lambda_j t} \varphi_j(x) \varphi_j(y)
\end{equation}
through a universal bound for the remainder term.

\subsection{Off-diagonal Gaussian upper bound for the heat kernel}

\begin{theorem}[Li-Yau \cite{liyau86}]
\label{T:liyau}
Let $M$ be a complete $n$-dimensional Riemannian manifold
without boundary and with $\ricci_M \geq -(n-1) \kappa_0 g$ ($\kappa_0 \geq 0$).
Put $V_x(r) = \vol(B(x,r))$. Then, for $0 < \delta < 1$, 
the heat kernel satisfies
\begin{equation*}
\begin{split}
p(t,x,y) \leq \frac{C(n,\delta)}{V_x^{1/2}(\sqrt{t})\, V_y^{1/2}(\sqrt{t})}
\exp \bigg\{\! -\frac{d^2(x,y)}{(4+\delta) t} +C(n) \kappa_0 t \, \bigg\} \,.
\end{split}
\end{equation*}
for all $t>0$ and $x,y \in M$. Moreover, $C(n,\delta) \to \infty$ as $\delta \to 0$.
\end{theorem}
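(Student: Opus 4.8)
The plan is to split the bound into an \emph{on-diagonal} estimate --- where the Ricci hypothesis does the work --- and the \emph{Gaussian factor}, which is an $L^2$ phenomenon that needs no curvature. Two elementary geometric ingredients recur. First, Bishop--Gromov volume comparison: under $\ricci_M \geq -(n-1)\kappa_0 g$ the ratio $V_x(R_2)/V_x(R_1)$ is dominated by the corresponding ratio in the space form of curvature $-\kappa_0$, which yields volume doubling and, using $B(x,R) \subseteq B(z, R + d(x,z))$, the comparison $V_x(R) \leq C(n)\exp\!\big(C(n)\sqrt{\kappa_0}\,(R + d(x,z))\big) V_z(R)$. Second, the sub-Markov property of the minimal heat semigroup, $\int_M p(s,z,y)\,\dvol(z) \leq 1$ for all $s>0$ and $y \in M$.

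\emph{Step 1 (on-diagonal bound).} For a positive solution $u$ of $\partial_t u = \rdiv\,\rgrad u$ on $M$ and a parameter $\alpha > 1$, the Li--Yau gradient estimate
\[
\frac{\abs{\rgrad u}^2}{u^2} - \alpha\,\frac{\partial_t u}{u} \;\leq\; \frac{n\alpha^2}{2t} + \frac{n(n-1)\alpha^2 \kappa_0}{2(\alpha-1)}
\]
is obtained by feeding the Bochner formula for $\log u$ into the parabolic maximum principle after localizing in a ball $B(x_0,R)$ with a cutoff whose Laplacian is controlled via Laplacian comparison (this is where $\ricci_M \geq -(n-1)\kappa_0 g$ is used), and then sending $R \to \infty$ --- permissible since $M$ is complete. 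Integrating this inequality along constant-speed geodesics in space--time gives the parabolic Harnack inequality: for $0 < s_1 < s_2$ and $z_1, z_2 \in M$,
\[
u(z_1, s_1) \;\leq\; u(z_2, s_2)\Big(\frac{s_2}{s_1}\Big)^{n\alpha/2}\exp\!\Big(\frac{\alpha\, d^2(z_1,z_2)}{4(s_2 - s_1)} + \frac{n(n-1)\alpha\, \kappa_0 (s_2 - s_1)}{2(\alpha-1)}\Big).
\]
Fixing $\alpha = 2$, taking $u(s,z) = p(s,z,y)$, $s_1 = t$, $s_2 = 2t$, and letting $z_2 = z$ range over $B(y,\sqrt t)$ (so that $d^2(y,z)/(4t) \leq \tfrac14$) gives $p(t,y,y) \leq C(n) e^{C(n)\kappa_0 t} p(2t,z,y)$ for all such $z$; averaging over $z \in B(y,\sqrt t)$ and invoking $\int_M p(2t,z,y)\,\dvol(z) \leq 1$ yields $p(t,y,y) \leq C(n) e^{C(n)\kappa_0 t}/V_y(\sqrt t)$. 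Finally, symmetry of $p$ together with Cauchy--Schwarz gives $p(t,x,y) \leq p(t,x,x)^{1/2} p(t,y,y)^{1/2}$, hence
\[
p(t,x,y) \;\leq\; \frac{C(n)\, e^{C(n)\kappa_0 t}}{V_x^{1/2}(\sqrt t)\, V_y^{1/2}(\sqrt t)} \qquad (\star).
\]
(Alternatively, $(\star)$ follows by Moser--Nash iteration from the local Faber--Krahn inequality, which the Ricci bound supplies through Buser's inequality and doubling; either route is acceptable.)

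\emph{Step 2 (the Gaussian factor).} The driving mechanism is the integrated maximum principle: if a space--time function $\xi(s,z)$ obeys the Hamilton--Jacobi inequality $\partial_s \xi + \tfrac12 \abs{\rgrad \xi}^2 \leq 0$, then along every positive solution $u$ of the heat equation the weighted energy $s \mapsto \int_M u^2(s,z)\, e^{\xi(s,z)}\,\dvol(z)$ is non-increasing (differentiate and complete the square). Used with the linear weights $\xi = \beta\, d(\cdot,A)$ it produces the Davies--Gaffney bound $\abs{\inner{e^{-s\Delta} f}{h}} \leq e^{-d(A,B)^2/(4s)}\norm{f}_{L^2}\norm{h}_{L^2}$ for $f, h$ supported in $A, B$; used with the singular weight $\xi(s,z) = -d^2(z,x)/\big(2(C - 2s)\big)$, valid on $0 \leq s < C/2$, it controls weighted $L^2$ masses of the heat kernel. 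One then combines $(\star)$, these weighted estimates, the semigroup identity $p(t,x,y) = \int_M p(t/2,x,z)\, p(t/2,z,y)\,\dvol(z)$, and volume doubling (to replace the intermediate factors $V_z(\sqrt t)$ by the symmetric product $V_x^{1/2}(\sqrt t) V_y^{1/2}(\sqrt t)$). A preliminary, non-sharp Gaussian estimate --- a single near/far split of $M$ via Davies--Gaffney already gives an exponent of order $d^2(x,y)/(8t)$ --- makes all the weighted energies finite, after which monotonicity under the singular weight, run with $C$ chosen just above $2t$, sharpens the exponent to $d^2(x,y)/\big((4+\delta)t\big)$. The surplus factor $e^{C(n)\kappa_0 t}$ is precisely what absorbs the $e^{C(n)\sqrt{\kappa_0}\, d(x,y)}$ losses incurred by the Bishop--Gromov volume comparison when intermediate balls are compared to $B(x,\cdot)$ and $B(y,\cdot)$.

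\emph{Expected main obstacle.} Everything delicate is in Step 2. The naive space split reaches only the exponent $d^2(x,y)/(8t)$; recovering the essentially sharp constant $4+\delta$ forces one to run the integrated maximum principle with the weight $\xi(s,z) = -d^2(z,x)/(2(C-2s))$ on $[0,\, C/2 - \eta]$ and then optimize over $\eta > 0$. As $\eta \to 0$ the weight degenerates at the endpoint $s = C/2$ and the accumulated multiplicative constant blows up --- this is exactly the source of the requirement $C(n,\delta) \to \infty$ as $\delta \to 0$. Carrying this out while (i) keeping the coefficient of $\kappa_0 t$ in the exponent independent of $\delta$ and (ii) routing every volume factor through Bishop--Gromov so that only $V_x^{1/2}(\sqrt t)\, V_y^{1/2}(\sqrt t)$ survives in the denominator is the bookkeeping that has to be done with care.
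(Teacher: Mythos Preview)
The paper does not prove Theorem~\ref{T:liyau}; it is quoted from Li--Yau \cite{liyau86} as an external input and used as a black box (together with Croke's estimate) to derive Corollary~\ref{T:upperg}. There is therefore no in-paper proof to compare your plan against.

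That said, your outline is a correct route to the stated bound. Step~1 is essentially the original Li--Yau argument: Bochner plus Laplacian comparison give the gradient estimate, integration along geodesics gives the parabolic Harnack inequality, and averaging over a ball of radius $\sqrt t$ converts Harnack into the on-diagonal bound $p(t,y,y) \leq C(n)e^{C(n)\kappa_0 t}/V_y(\sqrt t)$. Step~2, however, is not how Li and Yau themselves obtained the Gaussian factor. In \cite{liyau86} the exponent $d^2(x,y)/((4+\delta)t)$ is extracted directly from the Harnack inequality by a more careful integration over space--time paths and a subsequent optimization, without passing through the Davies--Gaffney $L^2$ estimate or the integrated maximum principle with the singular weight $\xi(s,z) = -d^2(z,x)/(2(C-2s))$. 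What you describe is the later Davies/Grigor'yan machinery, which is by now the standard modern route and arguably cleaner; it arrives at the same bound with the same qualitative dependence $C(n,\delta) \to \infty$ as $\delta \to 0$. Either approach is acceptable, and your identification of the ``main obstacle'' (the blow-up of the constant as one pushes the exponent toward $4$) is accurate for both.
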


\begin{theorem}[Croke \cite{croke80}]
Let $M$ be an $n$-dimensional Riemannian manifold.
Then there is a constant $C_n$ depending only on $n$ such that, for all 
$x \in M$, for all $r \leq \frac{1}{2} \inj(M)$,
\begin{equation*}
\vol(B(x,r)) \geq C_n \, r^n .
\end{equation*}
\end{theorem}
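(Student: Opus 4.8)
The result is Croke's \cite{croke80}; the plan is to reproduce its integral-geometric proof. Fix the centre $p := x$ and the radius $r \le \tfrac12\inj(M)$. Only two consequences of this hypothesis are used (no curvature bound being available): (i) $\exp_p$ is a diffeomorphism of the Euclidean ball $B(r)$ onto $B(p,r)$, so in geodesic polar coordinates $\vol(B(p,r)) = \int_{S^{n-1}}\int_0^r \theta_p(t,u)\,dt\,du$ with $\theta_p(t,u)>0$ for $0<t<r$ and $\theta_p(t,u)/t^{n-1}\to 1$ as $t\to 0^+$; and (ii) any geodesic segment of length $\le 2r\le\inj(M)$ is minimizing, so each of its subsegments realizes the distance between its endpoints — in particular every complete geodesic leaves $\bar B(p,r)$ within length $2r$, and leaves it strictly before time $2r$ unless it passes through $p$. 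Since the manifolds of interest here are closed, $\Sigma := \partial B(p,r)$ is a nonempty smooth hypersurface; in general one reduces to that case by running the argument on $B(p,r')$, $r'<r$, and letting $r'\uparrow r$.

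First I would apply Santaló's formula to the compact domain $\bar B(p,r)$: writing $\nu_q$ for the inward unit normal and, for $(q,w)$ in the inward unit bundle $\partial_+UB(p,r)=\{(q,w):q\in\Sigma,\ \langle w,\nu_q\rangle>0\}$, letting $\ell(q,w)$ be the length of the geodesic issuing from $q$ in direction $w$ until it first returns to $\Sigma$, one has
\begin{equation*}
\omega_{n-1}\,\vol(B(p,r)) \;=\; \int_{\partial_+UB(p,r)} \ell(q,w)\,\langle w,\nu_q\rangle\,dw\,dq, \qquad \omega_{n-1}:=\vol(S^{n-1}).
\end{equation*}
This reduces the theorem to a lower bound for the total chord length, to be obtained from the triangle inequality alone. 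The curvature-free chord estimate I would record is: if $\rho(q,w):=\min_t d_M(\gamma_{q,w}(t),p)$ denotes the closest approach of the chord to $p$, attained at a point $m$, then splitting the unit-speed chord at $m$ gives $\ell(q,w)\ge d_M(q,m)+d_M(m,q')\ge 2\big(r-\rho(q,w)\big)$; in particular every chord that meets the concentric ball $B(p,\tfrac r2)$ has length $\ge r$.

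The remaining task — converting this into $\vol(B(p,r))\ge C_n r^n$ with $C_n$ depending on $n$ only — is where I expect the real difficulty to lie. The crude estimate ``$\ell\ge r$ on the chords through $B(p,\tfrac r2)$'' discards the (numerous) grazing chords and, used alone, yields only the trivial $\vol(B(p,r))\ge\vol(B(p,\tfrac r2))$; it must instead be replaced by integrating the finer bound $\ell(q,w)\ge 2(r-\rho(q,w))$ against the distribution of the closest-approach function $\rho$ over $\partial_+UB(p,r)$, controlling that distribution by a second round of Santaló's formula applied to the concentric balls $B(p,s)$, $0<s<r$, and then closing the estimate with Hölder's inequality at a carefully chosen exponent. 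It is the Hölder step that supplies the $n$th power of $r$ together with the (admittedly unwieldy) purely dimensional constant $C_n$, assembled from $\omega_{n-1}$, $\omega_{n-2}$, and $n$; arranging the bookkeeping so that the various powers of $r$ and of $\vol(B(p,\cdot))$ reinforce rather than cancel is the delicate point. Evaluating the resulting inequality at radius $r$ gives the claim, and the limit $r'\uparrow r$ disposes of the degenerate case $B(p,r)=M$.
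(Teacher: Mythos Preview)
The paper does not prove this theorem; it is quoted verbatim from Croke \cite{croke80} and used as a black box (to feed into the Li--Yau bound and to establish a uniform diameter estimate). So there is no ``paper's own proof'' to compare against --- you are attempting something the paper deliberately outsources.

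On the merits of your sketch: the setup is right. Santal\'o's formula on $\bar B(p,r)$ is indeed Croke's opening move, and the observation that every chord has length at most $2r$ (and at least $2(r-\rho)$) is the correct curvature-free input. But your closing strategy --- integrate $\ell \ge 2(r-\rho)$ against the distribution of the closest-approach parameter, control that distribution by Santal\'o on concentric balls, then apply H\"older --- is not Croke's argument, and you have not actually carried it out; you describe it as ``the delicate point'' and leave it there. As written this is a plan for a proof, not a proof, and it is not clear the particular route you propose closes.

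Croke's actual path is different and cleaner: from Santal\'o and H\"older he first derives an \emph{isoperimetric inequality} for domains in which all geodesics are minimizing, namely $\vol(\partial\Omega) \ge c_n\,\vol(\Omega)^{(n-1)/n}$. Applied to $\Omega = B(p,s)$ for $s \le r$ this reads $\tfrac{d}{ds}\vol(B(p,s)) \ge c_n\,\vol(B(p,s))^{(n-1)/n}$, and separating variables and integrating from $0$ to $r$ gives $\vol(B(p,r)) \ge C_n r^n$ directly. The ``concentric balls'' enter only through this ODE step, not through a second Santal\'o formula. If you want to complete your write-up, I would redirect it along that line rather than pursue the closest-approach distribution.
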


\begin{corollary}
\label{T:upperg}
There is a constant $C_U >0$ such that,
for any $M \in \mathcal{M}$, for any $x, y \in M$, for any $t \in (0,i_0/2]$,
\begin{equation*}
p(t,x,y; M) \leq \frac{C_U}{t^{n/2}} 
\exp \bigg\{\! -\frac{d^2(x,y)}{(4+\delta) t} \, \bigg\} \,,
\end{equation*} 
where $0 < \delta < 1$.
\end{corollary}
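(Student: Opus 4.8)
The plan is to simply specialize the Li--Yau estimate (Theorem \ref{T:liyau}) to the class $\mathcal{M}$, using Croke's theorem to control the volume factors in the denominator and the diameter bound to absorb the curvature term $C(n)\kappa_0 t$ into a constant. First I would fix $\delta \in (0,1)$ once and for all (say $\delta = 1/2$), so that $C(n,\delta)$ becomes a constant $C(n)$ depending only on $n$; the class $\mathcal{M}$ fixes $n$ and $\kappa_0$, so every dimensional constant is genuinely universal over $\mathcal{M}$. Since each $M \in \mathcal{M}$ is closed it is in particular complete and without boundary, with $\ricci_M \geq -(n-1)\kappa_0 g$, so the hypotheses of Theorem \ref{T:liyau} are met and we may write, for all $t>0$ and $x,y\in M$,
\begin{equation*}
p(t,x,y) \leq \frac{C(n,\delta)}{V_x^{1/2}(\sqrt{t})\,V_y^{1/2}(\sqrt{t})}
\exp\!\bigg\{\!-\frac{d^2(x,y)}{(4+\delta)t} + C(n)\kappa_0 t\,\bigg\}.
\end{equation*}

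Next I would bound the two volume factors from below. For $t \in (0, i_0/2]$ we have $\sqrt{t} \leq \sqrt{i_0/2}$, but more to the point we want a lower bound on $V_x(\sqrt{t})$ valid for the stated range of $t$. Croke's theorem gives $\vol(B(x,r)) \geq C_n r^n$ for all $r \leq \tfrac12 \inj(M)$; since $\inj(M) \geq i_0$ for $M \in \mathcal{M}$, this applies whenever $r \leq i_0/2$. Now $t \leq i_0/2 \leq 1$ need not hold in general, so I would be slightly careful: if $i_0 \leq 2$ then $t \leq i_0/2$ gives $\sqrt{t} \leq \sqrt{i_0/2} \leq i_0/2$ is false in general — instead note $\sqrt t \le \sqrt{i_0/2}$, and compare with $i_0/2$. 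The clean fix is to take $r = \min\{\sqrt{t},\, i_0/2\}$; then $r \le \tfrac12\inj(M)$, so $V_x(\sqrt t) \ge V_x(r) \ge C_n r^n$. When $\sqrt t \le i_0/2$ this is $C_n t^{n/2}$; when $\sqrt t > i_0/2$ (which can only happen if $t > i_0^2/4$, still possibly $\le i_0/2$ when $i_0 < 2$) one has $t^{n/2} \le (i_0/2)^{n/... }$ ... rather than split cases, simply observe that for $t\in(0,i_0/2]$, $\min\{\sqrt t, i_0/2\}^n \ge c(n,i_0)\, t^{n/2}$ for the explicit constant $c(n,i_0) = \min\{1,\, (i_0/2)^{n}(i_0/2)^{-n/2}\}$, i.e.\ a constant depending only on $n$ and $i_0$. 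Hence $V_x^{1/2}(\sqrt t)\,V_y^{1/2}(\sqrt t) \ge C_n\, c(n,i_0)\, t^{n/2}$.

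Finally I would bound the curvature term: since $M \in \mathcal{M}$ has diameter $\le D = D(n,\kappa_0,i_0)$ (established at the start of Section 2 via stacking Croke balls under the constraint $\vol(M)=1$) and $t \le i_0/2$, the quantity $C(n)\kappa_0 t \le C(n)\kappa_0 \cdot i_0/2$ is bounded by a constant depending only on $n,\kappa_0,i_0$, so $\exp\{C(n)\kappa_0 t\} \le C'(n,\kappa_0,i_0)$. Multiplying the three bounds together and collecting all the constants — $C(n,\delta)$, $1/(C_n c(n,i_0))$, and $C'(n,\kappa_0,i_0)$ — into a single $C_U = C_U(n,\kappa_0,i_0,\delta)$ yields exactly the claimed inequality
\begin{equation*}
p(t,x,y;M) \leq \frac{C_U}{t^{n/2}}\exp\!\bigg\{\!-\frac{d^2(x,y)}{(4+\delta)t}\,\bigg\}.
\end{equation*}
There is no real obstacle here; the only point demanding a moment's care is the elementary estimate $\min\{\sqrt t,\, i_0/2\}^n \ge c(n,i_0) t^{n/2}$ for $t\in(0,i_0/2]$, which is needed because $\sqrt t$ could exceed $i_0/2$ when $i_0 < 2$, so one cannot naively replace $V_x(\sqrt t)$ by $C_n t^{n/2}$ directly from Croke's bound.
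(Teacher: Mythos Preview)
Your proposal is correct and follows the same route as the paper: fix $\delta$, apply Li--Yau, replace the volume factors via Croke, and absorb $e^{C(n)\kappa_0 t}$ using $t\le i_0/2$. In fact you are more careful than the paper on one point: the paper simply writes $V_x(\sqrt t)\ge C_n t^{n/2}$, tacitly assuming $\sqrt t\le \tfrac12\inj(M)$, whereas you correctly note that $t\le i_0/2$ does not force $\sqrt t\le i_0/2$ when $i_0<2$ and patch this with the elementary estimate $\min\{\sqrt t,\,i_0/2\}^n\ge c(n,i_0)\,t^{n/2}$. Your aside about the uniform diameter bound $D$ is unnecessary here (you only need $t\le i_0/2$ to control $C(n)\kappa_0 t$), but it does no harm.
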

\begin{proof}
Put $\delta =1/2$.
Applying Croke's estimate to the Li-Yau heat kernel bound,
\begin{equation*}
p(t,x,y; M) \leq \frac{C(n,\delta)}{C_n \,t^{n/2}}
\exp \bigg\{\! -\frac{d^2(x,y)}{(4+\delta) t} +C(n) \kappa_0 i_0/2 \, \bigg\} \,.
\end{equation*}
\end{proof}

\subsection{Truncating the heat kernel sum}
We consider control over $M \in \mathcal{M}$ of the remainder term
\begin{equation}
R_k(t; M) := \sup_{x \in M} \, \sum_{j \geq k} 
e^{- \lambda_j t} \varphi_j^2(x) .
\end{equation}

\begin{lemma}
For all $k \in \naturals$, there is $E_k : \real^+ \to \real^+$ such that, for all 
$M \in \mathcal{M}$,
\begin{equation}
R_k(t; M) \leq E_k(t) \,t^{-n/2} \,,
\end{equation}
and $\lim_{k \to \infty} E_k(t) = 0$ for fixed $t > 0$.
\end{lemma}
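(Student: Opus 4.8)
The plan is to bound the remainder $R_k(t;M)=\sup_x\sum_{j\ge k}e^{-\lambda_j t}\varphi_j^2(x)$ by comparing it against the full heat trace diagonal $p(t,x,x)=\sum_{j\ge 0}e^{-\lambda_j t}\varphi_j^2(x)$, which Corollary \ref{T:upperg} already controls uniformly over $\mathcal{M}$: taking $x=y$ there gives $\sum_{j\ge 0}e^{-\lambda_j t}\varphi_j^2(x)\le C_U t^{-n/2}$ for all $t\in(0,i_0/2]$. The obvious first move is to split off the already-summed high-frequency tail using the elementary inequality
\begin{equation}
\sum_{j\ge k}e^{-\lambda_j t}\varphi_j^2(x)
= \sum_{j\ge k}e^{-\lambda_j t/2}\,e^{-\lambda_j t/2}\varphi_j^2(x)
\le e^{-\lambda_k t/2}\sum_{j\ge 0}e^{-\lambda_j t/2}\varphi_j^2(x)
\le e^{-\lambda_k t/2}\cdot C_U (t/2)^{-n/2},
\end{equation}
valid as long as $t/2\le i_0/2$, i.e. $t\le i_0$. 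This suggests setting $E_k(t):=2^{n/2}C_U\,e^{-\lambda_k t/2}$ on $(0,i_0]$, and we still need a definition of $E_k$ for $t>i_0$.

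For the large-$t$ regime I would use a crude monotonicity bound: for $t\ge i_0$ we have $\sum_{j\ge k}e^{-\lambda_j t}\varphi_j^2(x)\le e^{-\lambda_k(t-i_0)}\sum_{j\ge k}e^{-\lambda_j i_0}\varphi_j^2(x)\le e^{-\lambda_k(t-i_0)}C_U i_0^{-n/2}$, and since $t^{-n/2}$ is bounded below on $[i_0,\infty)$ only near $t=i_0$ — wait, $t^{-n/2}\le i_0^{-n/2}$ there, which is the wrong direction. Better: on $[i_0,\infty)$ simply absorb the factor by writing $t^{-n/2}\ge$ nothing useful, so instead define $E_k(t):= t^{n/2}\cdot C_U i_0^{-n/2} e^{-\lambda_k(t-i_0)}$ for $t>i_0$; then $R_k(t;M)\le E_k(t)t^{-n/2}$ trivially and $E_k(t)\to 0$ as $k\to\infty$ for each fixed $t$, because $\lambda_k\to\infty$. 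So one sets
\begin{equation}
E_k(t):=\begin{cases} 2^{n/2}C_U\,e^{-\lambda_k t/2}, & 0<t\le i_0,\\[4pt] C_U\, i_0^{-n/2}\,t^{n/2}\,e^{-\lambda_k(t-i_0)}, & t> i_0.\end{cases}
\end{equation}

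The one genuine subtlety — and the main obstacle — is that $\lambda_k=\lambda_k(M)$ depends on $M$, so the function $E_k$ as written is not a single function of $t$ independent of $M\in\mathcal{M}$, as the statement demands. This is resolved by Theorem \ref{T:eigenvaluebounds}: since every $M\in\mathcal{M}$ has $\dim M=n$, $\ricci_M\ge-(n-1)\kappa_0 g$, and (by the diameter bound established at the start of the previous section) $\diam(M)\le D$ for a uniform $D$, there is a constant $C_\lambda=C_\lambda(n,\kappa_0,D)$ with $\lambda_k(M)\ge C_\lambda k^{2/n}$ for all $k$ and all $M\in\mathcal{M}$. Replacing $\lambda_k$ by the lower bound $C_\lambda k^{2/n}$ throughout (which only weakens each exponential estimate, hence preserves the inequalities) yields an $E_k(t)$ depending only on $n,\kappa_0,i_0,D$ and $k$ and $t$; and $C_\lambda k^{2/n}\to\infty$ gives $\lim_{k\to\infty}E_k(t)=0$ for each fixed $t>0$. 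I would present the argument by first recording the diagonal bound from Corollary \ref{T:upperg}, then the tail-splitting inequality, then invoking Theorem \ref{T:eigenvaluebounds} to make the bound uniform, and finally checking the two cases $t\le i_0$ and $t>i_0$ and the limit.
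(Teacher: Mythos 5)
Your argument is correct in substance but arrives at the tail estimate by a different and more self-contained route than the paper. The paper quotes Theorem 17 of B\'erard--Besson--Gallot to get $R_k(t;M) \leq E_0\, t^{-n/2}\int_{t\lambda_k}^\infty s^{n/2}e^{-s}\,ds$ directly, then applies Theorem~\ref{T:eigenvaluebounds} to replace $\lambda_k$ by $C_\lambda k^{2/n}$. You instead derive a (slightly cruder, but sufficient) bound $R_k(t;M) \leq 2^{n/2}C_U\,e^{-\lambda_k t/2}\,t^{-n/2}$ from Corollary~\ref{T:upperg} alone, via the elementary tail-splitting $e^{-\lambda_j t} \leq e^{-\lambda_k t/2}e^{-\lambda_j t/2}$, and then apply the same eigenvalue lower bound. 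What your approach buys is transparency and economy of citations: everything is derived from estimates already established in the paper rather than from an external theorem whose proof the reader must chase down; what you give up is a sharper decay rate in $\lambda_k t$ (the incomplete gamma function decays like $(\lambda_k t)^{n/2}e^{-\lambda_k t}$ versus your $e^{-\lambda_k t/2}$), which is irrelevant for the statement at hand. One small fix is needed in your large-$t$ regime: you use $p(i_0,x,x)\leq C_U i_0^{-n/2}$, but Corollary~\ref{T:upperg} only covers $t\in(0,i_0/2]$; either split at $t_0 = i_0/2$ rather than $i_0$, or use the monotonicity of $t \mapsto p(t,x,x)$ to write $p(i_0,x,x)\leq p(i_0/2,x,x)\leq C_U(i_0/2)^{-n/2}$, which costs a factor $2^{n/2}$ and nothing more. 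You also correctly flag, and correctly resolve, the one genuine subtlety — that $\lambda_k(M)$ depends on $M$, so the uniform lower bound of Theorem~\ref{T:eigenvaluebounds} is essential for $E_k$ to be a single function independent of $M\in\mathcal{M}$.
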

\begin{proof}
From the proof of Theorem 17 in \cite{bbg94} (p.\,393), 
there exists $E_0 = E_0(n,\kappa_0,D)$ such that
\begin{equation}
R_k(t; M) \leq 
E_0 \,t^{-n/2} \int_{t \lambda_{k}}^\infty s^{n/2} e^{-s} \, ds.
\end{equation}
Now recall from Theorem \ref{T:eigenvaluebounds} above that
$C_{\lambda} k^{2/n} \leq \lambda_k$, where 
$C_{\lambda}=C_{\lambda}(n,\kappa_0,D)$. Put
\begin{equation}
E_k(t) := E_0 \int_{C_{\lambda} k^{2/n} t}^\infty s^{n/2} e^{-s} \, ds.
\end{equation}
Hence $R_k(t; M) \leq E_k(t) \,t^{-n/2}$ and
$\lim_{k \to \infty} E_k(t) = 0$ for fixed $t>0$.
\end{proof}

\subsection{Final steps}

Now take $\epsilon >0$ and $m \in \naturals$ from Theorem \ref{T:localed}. Put
\begin{equation}
\label{E:defgbound}
g(t) := 1 - \frac{C_U}{t^{n/2}} 
\exp \bigg\{ \frac{- \epsilon^2}{(4+\delta) t} \bigg\} \,.
\end{equation}
Let $M \in \mathcal{M}$, and let $p$ be its heat kernel. 
Note the bound $p(t,x,x) \geq \varphi_0^2(x) = \vol(M)^{-1} = 1$, 
which follows from the series expansion \eqref{equation:hksum} of the heat kernel.
Then, combined with Corollary \ref{T:upperg}, for $t \in (0,i_0/2]$,
\begin{equation}
g(t)  \leq \inf_{d_M(x,y) \geq \epsilon} \; p(t,x,x) -p(t,x,y) \,,
\end{equation}
and $g(t) \to 1$ as $t \to 0^+$.
Choose $T \in (0,i_0/2]$ to satisfy $g(T) \geq 4/5$;
then choose $d \geq m$ satisfying $E_{d+1}(T) \,T^{-n/2} \leq 1/5$. 
We now complete the proof.
\begin{proof}[Proof of Theorem \ref{T:ued}]
By Theorem \ref{T:localed}, since $d \geq m$, we already know that 
$\Phi^d$ is an immersion and that it distinguishes points within
distance $\epsilon$ of one another.  Suppose $d_M(x,y) \geq \epsilon$.
Then, noting 
\begin{align}
\sup_{x',y' \in M} \; \abs{ \, p(T,x',y')-p^d(T,x',y') \,} \leq R_{d+1}(T; M) \leq 1/5 ,
\end{align}
we have
\begin{align}
4/5 &\leq g(t) \\
&\leq p(T,x,x) - p(T,x,y) \\
&\leq  p^d(T,x,x) -p^d(T,x,y) + 2/5 \,,
\end{align}
hence $p^d(T,x,x) >p^d(T,x,y)$.
Finally, observe that $p^d(T,x,x) \neq p^d(T,x,y)$ implies 
$\Phi^d(x) \neq \Phi^d(y)$.
\end{proof}

\begin{remark}
Note that were we able to explicitly compute $\epsilon$ and $m$ in Theorem \ref{T:localed}, 
we could also write an explicit bound for the maximal embedding dimension $d$ as follows.  
The foregoing proof reduces to finding the smallest $d \geq m$
for which $g(t) > 2 E_{d+1}(t) \,t^{-n/2}$ is satisfied for some $t \in (0,i_0/2]$.
Moreover, to achieve a tighter bound, we could improve the lower bound $g(t)$ from 
\eqref{E:defgbound} above to
\begin{equation}
\frac{C_L}{t^{n/2}} - \frac{C_U}{t^{n/2}} 
\exp \bigg\{ \frac{- \epsilon^2}{(4+\delta) t} \bigg\} \,,
\end{equation}
where $p(t,x,x) \geq C_L/t^{n/2}$ for all $t \in (0,i_0/2]$, $C_L = C_L(n,\kappa_0)$,
follows from the on-diagonal 
Gaussian lower bound for the heat kernel (cf.\ \cite{cheegeryau81,daviesmandouvalos}).
Explicit computation of the maximal embedding dimension would then reduce to writing out the constants 
$C_L$, $C_U$, $E_0$, and $C_{\lambda}$.
One can use the formulas in \cite{grigoryan-noguchi} to compute $C_L$,
the formulas in \cite{berard86,grigoryan09} to compute $C_U$,
and the formulas in \cite{bbg94,berard86}, along with Croke's estimate
to establish the uniform diameter bound $D$, to compute $E_0$ and $C_{\lambda}$.

However, in this note, both $\epsilon$ and $m$ depend on the scaled ``harmonic radius'' $r_h$
of Lemma \ref{T:ac},
whose dependency on injectivity radius and Ricci curvature 
is established by indirect means (proof by contradiction) 
in Anderson and Cheeger \cite{anderson-cheeger},
and the author of this note has not pursued deriving a formula for $r_h$
in terms of injectivity radius and Ricci curvature.
\end{remark}


\section{Proof of Theorem \ref{T:ueds}}

The last theorem derives from the following two results.

\begin{theorem}[Cheeger \cite{cheeger70}]
\label{T:klingenbergcheeger}
Let $K$ denote the sectional curvature of a complete Riemannian manifold $M$.
If $\abs{K} \leq \kappa_0$, $\vol(M) \geq V_0$, $\diam(M) \leq D$,
then $\inj(M) \geq i_0$ for some $i_0 = i_0(n, \kappa_0, V_0, D)$.
\end{theorem}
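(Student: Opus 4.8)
The stated result is Cheeger's classical injectivity radius estimate (cf.\ \cite{cheeger70}), and the plan is a two-step reduction. In \emph{Step 1}, Klingenberg's lemma reduces the bound to a lower bound on the length of a shortest closed geodesic of $M$, the two-sided curvature bound being used only through its upper half, to bound the conjugacy radius from below. In \emph{Step 2}, that lower bound is extracted from the volume and diameter hypotheses through a comparison estimate for the volume of a tube around the geodesic, now using the lower curvature bound $K \geq -\kappa_0$.

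\emph{Step 1.} For any complete Riemannian manifold one has $\inj(M) \geq \min\{\,\mathrm{conj}(M),\ \tfrac12\,\ell(M)\,\}$, where $\mathrm{conj}(M)$ denotes the conjugacy radius and $\ell(M)$ the length of a shortest closed geodesic (set $\ell(M) := \infty$ if $M$ has none): this is Klingenberg's lemma, whose proof, when $\inj(M) < \mathrm{conj}(M)$, exhibits --- at a point where $\inj(M)$ is attained, which exists since $M$ is compact --- a \emph{smooth} closed geodesic of length exactly $2\inj(M)$, necessarily a shortest one. The hypothesis $\abs{K} \leq \kappa_0$ gives $K \leq \kappa_0$, so by the Rauch comparison theorem no geodesic has a conjugate point before arclength $\pi/\sqrt{\kappa_0}$ (and none ever, when $\kappa_0 = 0$); hence $\mathrm{conj}(M) \geq \pi/\sqrt{\kappa_0}$. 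It therefore suffices to produce a constant $\ell_0 = \ell_0(n,\kappa_0,V_0,D) > 0$ with $\ell(M) \geq \ell_0$, for then $i_0 := \min\{\pi/\sqrt{\kappa_0},\ \ell_0/2\}$ has the required property; moreover, if $M$ has no closed geodesic there is nothing to prove, since then $\inj(M) = \mathrm{conj}(M) \geq \pi/\sqrt{\kappa_0}$.

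\emph{Step 2.} Assume $M$ has a shortest closed geodesic $\gamma : \real/\ell\zahlen \to M$, $\ell := \ell(M)$, parametrized by arclength. Since $\diam(M) \leq D$, every point of $M$ lies within distance $D$ of the compact set $\gamma(\real/\ell\zahlen)$, and a minimal geodesic from a point $x$ to a nearest point $\gamma(s_x)$ of $\gamma$ meets $\gamma$ orthogonally there (first variation, $\gamma$ having no endpoints). Hence the smooth map
\begin{equation}
F : (\real/\ell\zahlen) \times [0,D] \times S^{n-2} \longrightarrow M, \qquad F(s,t,w) := \exp_{\gamma(s)}\!\bigl(t\,w\bigr),
\end{equation}
with $w$ ranging over the unit sphere of the normal space $\gamma'(s)^{\perp} \subset T_{\gamma(s)}M$, is surjective. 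Its differential is the unit vector $\partial_t F$ in the radial direction; in the $\gamma$-direction it is the Jacobi field along $t \mapsto \exp_{\gamma(s)}(tw)$ with initial data $(\gamma'(s),\,0)$ --- the initial covariant derivative vanishing \emph{because $\gamma$ is a geodesic} --- and in the $n-2$ spherical directions it is built from Jacobi fields with initial data $(0,\,\text{unit})$. Since $K \geq -\kappa_0$, Jacobi field comparison (cf.\ \cite{chavel84}) bounds the norms of these fields by $\cosh(\sqrt{\kappa_0}\,t)$ and by $\sinh(\sqrt{\kappa_0}\,t)/\sqrt{\kappa_0}$ respectively, so, taking the Jacobian relative to $ds\, dt\, d\mathrm{vol}_{S^{n-2}}(w)$ and using Hadamard's inequality,
\begin{equation}
\bigl| \det DF \bigr|(s,t,w) \;\leq\; \cosh(\sqrt{\kappa_0}\,t)\,\Bigl(\tfrac{\sinh(\sqrt{\kappa_0}\,t)}{\sqrt{\kappa_0}}\Bigr)^{n-2} .
\end{equation}
Integrating over the domain of $F$ and applying the area formula to the surjection $F$ (injectivity is not needed),
\begin{equation}
V_0 \leq \vol(M) \leq \ell \cdot \mathrm{vol}(S^{n-2}) \int_0^{D} \cosh(\sqrt{\kappa_0}\,t)\,\Bigl(\tfrac{\sinh(\sqrt{\kappa_0}\,t)}{\sqrt{\kappa_0}}\Bigr)^{n-2} dt =: \ell \cdot C(n,\kappa_0,D) .
\end{equation}
Thus $\ell(M) = \ell \geq V_0 / C(n,\kappa_0,D) =: \ell_0 > 0$, and together with Step 1 this proves the theorem.

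I expect the only genuine obstacle to be the Klingenberg input of Step 1: that the shortest geodesic \emph{loop} at a point realizing $\inj(M)$ is in fact a \emph{smooth closed} geodesic --- equivalently, that the two minimal geodesics joining its base point to its ``antipodal'' point meet at angle $\pi$ at \emph{both} ends --- which is established from the minimality of $\inj(\cdot)$ over $M$ by a first-variation/curve-shortening argument. This smoothness is exactly what makes the $\gamma$-direction Jacobi field in Step 2 have vanishing initial derivative and so keeps the tube estimate clean. Everything else --- the Rauch and Jacobi-field comparisons, the area-formula inequality (valid since $F$ is smooth on a bounded domain and onto), and the elementary integral defining $C(n,\kappa_0,D)$ --- is routine comparison geometry.
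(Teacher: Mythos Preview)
The paper does not prove this theorem; it quotes it from Cheeger \cite{cheeger70} as a black-box input to Theorem~\ref{T:ueds}. Your proposal supplies the classical proof --- Klingenberg's lemma together with the upper sectional-curvature bound reduces the estimate to a lower bound on the length $\ell(M)$ of the shortest closed geodesic, and that bound follows from a Heintze--Karcher--type tube-volume comparison using the lower curvature bound --- and it is correct.

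Two small technical points are worth tightening. First, the vanishing initial covariant derivative of the $\gamma$-direction Jacobi field comes from choosing the normal frame (through which $w \in S^{n-2}$ is identified with a vector at $\gamma(s)$) \emph{parallel} along $\gamma$, not directly from $\gamma$ being a geodesic; the geodesic property of $\gamma$ is what makes $\gamma'(s)$ unit and orthogonal to the radial direction $w$. Second, the Rauch-type Jacobi-field bounds you invoke are guaranteed only before the first focal point of $\gamma$ along each normal ray, so the area-formula step is cleanest if one first restricts to the full-measure set of $x \in M$ with a unique nearest point on $\gamma$ (the minimizing normal segment to such $x$ carries no focal point), integrates the bounded Jacobian there, and then dominates by the integral over the full cylinder of radius $D$. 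Neither point affects the conclusion.
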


\begin{theorem}[Topping \cite{topping08}]
\label{T:topping}
Let $M$ be a closed $n$-dimensional Riemannian manifold 
isometrically immersed in $\real^k$ with mean curvature vector $H$. 
There is a constant $C=C(n)$ such that
\begin{equation}
\diam(M) \leq C \int_M \, \abs{H}^{n-1} \,dV \,.
\end{equation}
\end{theorem}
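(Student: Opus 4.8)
The plan is to deduce Theorem~\ref{T:ueds} from Theorem~\ref{T:ued} by showing that the class $\mathcal{S}$ of surfaces, after rescaling to unit area, lands inside a class $\mathcal{M}$ of the form \eqref{E:mathcalm} for suitable constants $n=2$, $\kappa_0'$, $i_0'$. The point is that $\mathcal{M}$-membership requires three things --- a Ricci lower bound, an injectivity radius lower bound, and a volume normalization --- and I need to extract all three from the hypotheses $\abs{K}\le\kappa_0$, $\abs{H}\le H_0$, $\vol(M)\le A$, and the existence of an isometric immersion $\iota:M\hookrightarrow\real^3$.

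\smallskip
First I would handle the \emph{diameter bound}. By Theorem~\ref{T:topping} (Topping) applied with $n=2$ and $k=3$, $\diam(M)\le C(2)\int_M\abs{H}\,dV \le C(2)\,H_0\,\vol(M)\le C(2)\,H_0\,A =: D_0$, a bound depending only on $H_0$ and $A$. Second, the \emph{injectivity radius bound}: I need a lower bound on $\vol(M)$ to apply Cheeger's Theorem~\ref{T:klingenbergcheeger}. This is the step requiring care --- see below. Granting a uniform lower bound $\vol(M)\ge V_0=V_0(\kappa_0,H_0,A)>0$, Theorem~\ref{T:klingenbergcheeger} (Cheeger) with $\abs{K}\le\kappa_0$, $\vol(M)\ge V_0$, $\diam(M)\le D_0$ gives $\inj(M)\ge i_0=i_0(2,\kappa_0,V_0,D_0)>0$. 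Third, the \emph{curvature bound}: since $\dim M=2$, $\ricci_M = K g \ge -\kappa_0 g = -(n-1)\kappa_0 g$ as noted in the excerpt, so the Ricci hypothesis of \eqref{E:mathcalm} holds with the same $\kappa_0$.

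\smallskip
Next I would \emph{rescale to unit area}. Given $M=(M,g)\in\mathcal{S}$ with $\vol(M)=a\in[V_0,A]$, set $\tilde g = a^{-1}g$, so $\widetilde M=(M,\tilde g)$ has $\vol(\widetilde M)=1$. Scaling multiplies sectional curvature by $a$ and injectivity radius by $a^{1/2}$: $\tilde K = a K$, hence $\abs{\tilde K}\le A\kappa_0$, so $\ricci_{\widetilde M}\ge -A\kappa_0\,\tilde g$; and $\inj(\widetilde M)=a^{1/2}\inj(M)\ge V_0^{1/2}i_0$. Thus $\widetilde M\in\mathcal{M}$ with $\kappa_0' := A\kappa_0$ and $i_0' := V_0^{1/2}i_0$, both independent of the particular $M\in\mathcal{S}$. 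Since the Laplacian rescales as $\Delta_{\tilde g}=a\,\Delta_g$, the two operators have the \emph{same} eigenfunctions (only the eigenvalues scale), so the eigenfunction map $\Phi^d$ is literally unchanged under this rescaling. Applying Theorem~\ref{T:ued} to the class $\mathcal{M}=\mathcal{M}(2,\kappa_0',i_0')$ yields a single integer $d$ such that $\Phi^d:\widetilde M\to\real^d$ is a smooth embedding for every $M\in\mathcal{S}$; by the previous remark $\Phi^d:M\to\real^d$ is then also a smooth embedding, which is the claim.

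\smallskip
\textbf{The main obstacle} is establishing the uniform area lower bound $\vol(M)\ge V_0>0$: the definition of $\mathcal{S}$ only bounds $\vol(M)$ from above, yet Cheeger's theorem needs a lower bound. I expect this to follow from the isometric immersion hypothesis together with $\abs{K}\le\kappa_0$: a closed surface of very small area but bounded curvature cannot exist, e.g. via the Gauss--Bonnet theorem $\int_M K\,dV = 2\pi\chi(M)$ --- if $M$ is a sphere then $\int_M K\,dV = 4\pi$, forcing $\vol(M)\ge 4\pi/\kappa_0$; for higher genus one instead uses that a closed surface immersed in $\real^3$ always has a point of positive Gaussian curvature (a farthest point from the origin), and more quantitatively the Willmore-type inequality $\int_M K^{+}\,dV \ge 4\pi$ for the positive part $K^+$, giving $\vol(M)\ge\vol(\{K>0\})\ge 4\pi/\kappa_0$ again. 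Alternatively one can bypass Cheeger entirely: since $\abs{H}\le H_0$ and $\abs{K}\le\kappa_0$ control the second fundamental form ($\abs{II}^2 = 4H^2-2K$ for surfaces in $\real^3$, bounded by $4H_0^2+2\kappa_0$), a standard normal-injectivity-radius / tube argument gives a lower bound on $\inj(M)$ directly in terms of $H_0,\kappa_0$ and the reach of $\iota(M)$, but the Gauss--Bonnet route is cleaner and self-contained. Once $V_0$ is in hand, everything else is the routine bookkeeping sketched above.
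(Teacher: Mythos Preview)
Your proposal is not a proof of the displayed statement (Topping's inequality), which the paper merely cites from \cite{topping08}; rather, you are proving Theorem~\ref{T:ueds}. Comparing against the paper's argument for Theorem~\ref{T:ueds}:

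Your route is correct but takes a detour the paper avoids. The paper \emph{rescales first}: given $(M,g)\in\mathcal{S}$ it chooses $a>0$ with $\vol(M,a^2g)=1$, notes $a^{-2}=\vol(M,g)\le A$, and hence $|K(a^2g)|=a^{-2}|K|\le A\kappa_0$ and $|H(a^2g)|=a^{-1}|H|\le A^{1/2}H_0$. After rescaling the volume is \emph{exactly} $1$, so Topping gives a uniform diameter bound and then Cheeger (Theorem~\ref{T:klingenbergcheeger}) applies directly with $V_0=1$; no independent area lower bound is needed. You instead apply Topping and Cheeger \emph{before} rescaling, which forces you to manufacture $\vol(M)\ge V_0$. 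Your Gauss--Bonnet/Willmore argument ($\int_M K^{+}\,dV\ge 4\pi$, hence $\vol(M)\ge 4\pi/\kappa_0$) is correct and pleasant, but the paper's order of operations renders it unnecessary.

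One minor slip: with your convention $\tilde g=a^{-1}g$ (where $a=\vol(M,g)$) distances scale by $a^{-1/2}$, so $\inj(\widetilde M)=a^{-1/2}\inj(M)$, not $a^{1/2}\inj(M)$. The corrected lower bound is $\inj(\widetilde M)\ge A^{-1/2}i_0$ (using $a\le A$), which still depends only on the data of $\mathcal{S}$, so the argument survives; in fact this step does not need your $V_0$ at all.
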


Recall
\begin{equation*}
\begin{split}
\mathcal{S} := \{ \; (M,g) \mid \;
& \dim M = 2, \; \abs{K} \leq \kappa_0, \\
& \abs{H} \leq H_0, \; \vol(M) \leq A, \\ 
& \iota : M \hookrightarrow \real^3\ \text{is an isometric immersion} \;\} \,.
\end{split}
\end{equation*}
Note that if $(M,g) \in \mathcal{S}$ and we scale $g$ by $a>0$
so that $\vol(M,a^2 g) = 1$, then 
$1 = \vol(M,a^2 g) = a^2 \vol(M,g) \leq a^2 A$, or, $a^{-1} \leq A^{1/2}$.
Noting $K(M,a^2g) = a^{-2} K(M,g)$ and $H(M,a^2g) = a^{-1} H(M,g)$,
we have  
\begin{equation}
\begin{split}
(M,a^2 g) \in \{ \; (M,g) \mid \; 
&\dim M = 2, \; \abs{K} \leq A \kappa_0, \\
& \abs{H} \leq A^{1/2} H_0, \; \vol(M) = 1, \\
& \iota : M \hookrightarrow \real^3\ \text{is an isometric immersion} \; \} \,.
\end{split}
\end{equation}
For surfaces, note that Gaussian curvature $K$ and sectional curvature coincide,
and $K$ is related to Ricci curvature by $\ricci_M = K g$.
Applying Theorem \ref{T:topping}, then Theorem \ref{T:klingenbergcheeger},
reduces the present case to that of Theorem \ref{T:ued}.
It follows that $\mathcal{S}$ has a uniform embedding dimension.

\section*{Acknowledgements}

The human model surfaces (triangle meshes) were provided 
by the McGill 3D Shape Benchmark
(http://www.cim.mcgill.ca/\%7eshape/benchMark/).

The hippocampus surface was segmented from a baseline MR scan
from the Alzheimer’s Disease Neuroimaging Initiative (ADNI) database (adni.loni.ucla.edu). 
For up-to-date information, see www.adni-info.org.
Segmentation was done by the Florida State University Imaging Lab
using the software \emph{FreeSurfer} \cite{freesurfer1,freesurfer2}.
We thank Xiuwen Liu, Dominic Pafundi, and Prabesh Kanel
for their help with this data.




\bibliographystyle{elsarticle-num}
\bibliography{embeddings02}   


\end{document}